%% file: neurips_2025.tex
\documentclass{article}

\PassOptionsToPackage{numbers, compress}{natbib}
\usepackage[preprint]{neurips_2025}

 \usepackage{amsmath}
\usepackage[utf8]{inputenc} 
\usepackage[T1]{fontenc}    
\usepackage{hyperref}       
\usepackage{url}            
\usepackage{booktabs}       
\usepackage{amsfonts}       
\usepackage{nicefrac}       
\usepackage{microtype}      
\usepackage{xcolor}         
\usepackage{xpatch}
\usepackage{amsthm} 
\newtheorem{theorem}{Theorem} 
\usepackage{xpatch}
\usepackage{amsthm}
\usepackage{mdframed} 
\usepackage{graphicx}
\usepackage{algorithm}
\usepackage{algpseudocode}
\usepackage{mathtools}
\usepackage{wrapfig}

\usepackage{tikz} 
\usepackage[utf8]{inputenc} 
\usepackage[T1]{fontenc}    
\usepackage{url}            
\usepackage{microtype}      
\usepackage{algorithm}
\usepackage{algpseudocode}
\usepackage{listings}
\usepackage{overpic}
\usepackage{wrapfig}
\usepackage{colortbl}
\usepackage{tabularx}
\usepackage{verbatim}
\usepackage{nicefrac}
\usepackage{empheq}         

\usepackage{booktabs}   
\usepackage{caption} 
\usepackage{subcaption}
\usepackage{multirow}       
\usepackage{makecell}       
\usepackage{adjustbox}      
\usepackage{graphicx}    
\usepackage{array}

\usepackage{tikz}
\usepackage{amsmath} 

\usepackage{centernot}
\usepackage{xcolor}
\definecolor{citecolor}{HTML}{0071BC}
\definecolor{linkcolor}{HTML}{ED1C24}

\usetikzlibrary{arrows.meta, positioning, calc}

\definecolor{myorange}{RGB}{255,127,0} 
\definecolor{myblue}{RGB}{0,114,189}  

\usepackage{amsthm}
\usepackage{amssymb}
\usepackage{amsmath}
\usepackage{amsfonts}       
\usepackage{nicefrac}       
\usepackage{mathtools}
\newcommand{\gc}[1]{\textcolor{gray}{#1}}

\usepackage{graphicx}
\usepackage{subcaption} 
\title{Flow Matching with Arbitrary Auxiliary Paths}

\author{%
  Xin Peng$^{1}$ \quad
  Ang Gao$^{1}$\thanks{Corresponding author} \\
  $^{1}$School of Physical Science and Technology, \\
  Beijing University of Posts and Telecommunications, Beijing, China \\
  \texttt{anggao@bupt.edu.cn}
}

\begin{document}

\maketitle

\begin{abstract}
We introduce a new generative modeling framework, \textbf{Flow Matching with Arbitrary Auxiliary Paths (AuxPath-FM)}, which generalizes conditional flow matching by incorporating an auxiliary variable drawn from an arbitrary distribution into the probability path. 
Unlike prior methods that restrict auxiliary components to Gaussian noise, AuxPath-FM allows the variable $\eta$ to follow any distribution, producing trajectories of the form $X_t = a(t)X_1 + b(t)X_0 + c(t)\eta$. 
We theoretically demonstrate that this construction preserves the continuity equation and maintains a training objective consistent with the marginal formulation. 
This flexibility enables the design of diverse probability paths using various priors, including Gaussian, Uniform, Laplace, and discrete Rademacher distributions, each offering unique geometric properties for generative flows. 
Furthermore, our framework allows for specialized tasks such as label-guided generation by encoding structured semantic information into the auxiliary distribution. 
Overall, AuxPath-FM provides a principled and general foundation for probability path design, offering both theoretical generality and practical flexibility for diverse generative modeling tasks.
\end{abstract}

\section{Introduction}

Continuous-time generative models, such as score-based diffusion models~\cite{song2021score,ho2020ddpm} and flow matching frameworks~\cite{lipman2023flow,liu2023rectified,tong2023conditional}, have emerged as powerful tools for modeling complex data distributions by learning dynamics that transport a simple base distribution to a target distribution. In conditional generative modeling, incorporating label or side information is crucial for controllable sample generation. However, most existing approaches introduce such information only through conditioning neural networks~\cite{ho2022classifier,dhariwal2021diffusion}, rather than modifying the underlying probability paths. While recent advances have expanded flow matching to multi-scale representations~\cite{kumar2026laplacian}, physical systems with hard constraints~\cite{chen2025physics}, and complex temporal forecasting~\cite{spectflow2025}, these models still primarily rely on network-side conditioning. Furthermore, recent theoretical analysis suggests that flow matching is naturally adaptive to the intrinsic manifold structure of data~\cite{kumar2026manifold}, yet the influence of the path's stochastic component remains under-explored in the context of trajectory-level control.

Meanwhile, prior path-based formulations, including stochastic interpolants~\cite{albergo2023stochastic} and Schrödinger Bridge methods~\cite{de2021diffusion,chen2023schrodinger,pooladian2023multimarginal}, typically rely on Gaussian noise or Brownian dynamics for stochastic interpolation, which limits the flexibility of designing probability paths for structured guidance. Although recent research has introduced discrete state-space flows via edit operations~\cite{havasi2025edit,luedke2026editpp} and geometric flows on statistical manifolds~\cite{davis2024fisher}, these methods often remain specialized to particular data types. Modern alignment techniques, such as policy gradient methods for flow matching~\cite{liu2025flowgrpo}, further highlight the need for more expressive generative trajectories that can be steered via diverse signals. 

In this work, we introduce \textbf{Flow Matching with Arbitrary Auxiliary Paths (AuxPath-FM)}, a general framework that extends flow matching by incorporating an auxiliary random variable into the probability path. Specifically, we construct trajectories of the form
\begin{equation}
X_t = a(t)X_1 + b(t)X_0 + c(t) \eta,
\end{equation}
where $X_0 \sim p_0$, $X_1 \sim p_1$, and $\eta$ is an auxiliary random variable drawn from an arbitrary distribution. Unlike prior interpolation schemes that restrict auxiliary terms to Gaussian noise~\cite{albergo2023stochastic,de2021diffusion}, our formulation allows flexible auxiliary designs such as condition-dependent sources~\cite{csfm2026} or structured transition kernels~\cite{shaul2025transition}, enabling the auxiliary variable to directly encode structured signals in the generative trajectory.

\begin{figure}[t]
    \centering
    \includegraphics[width=0.80\textwidth]{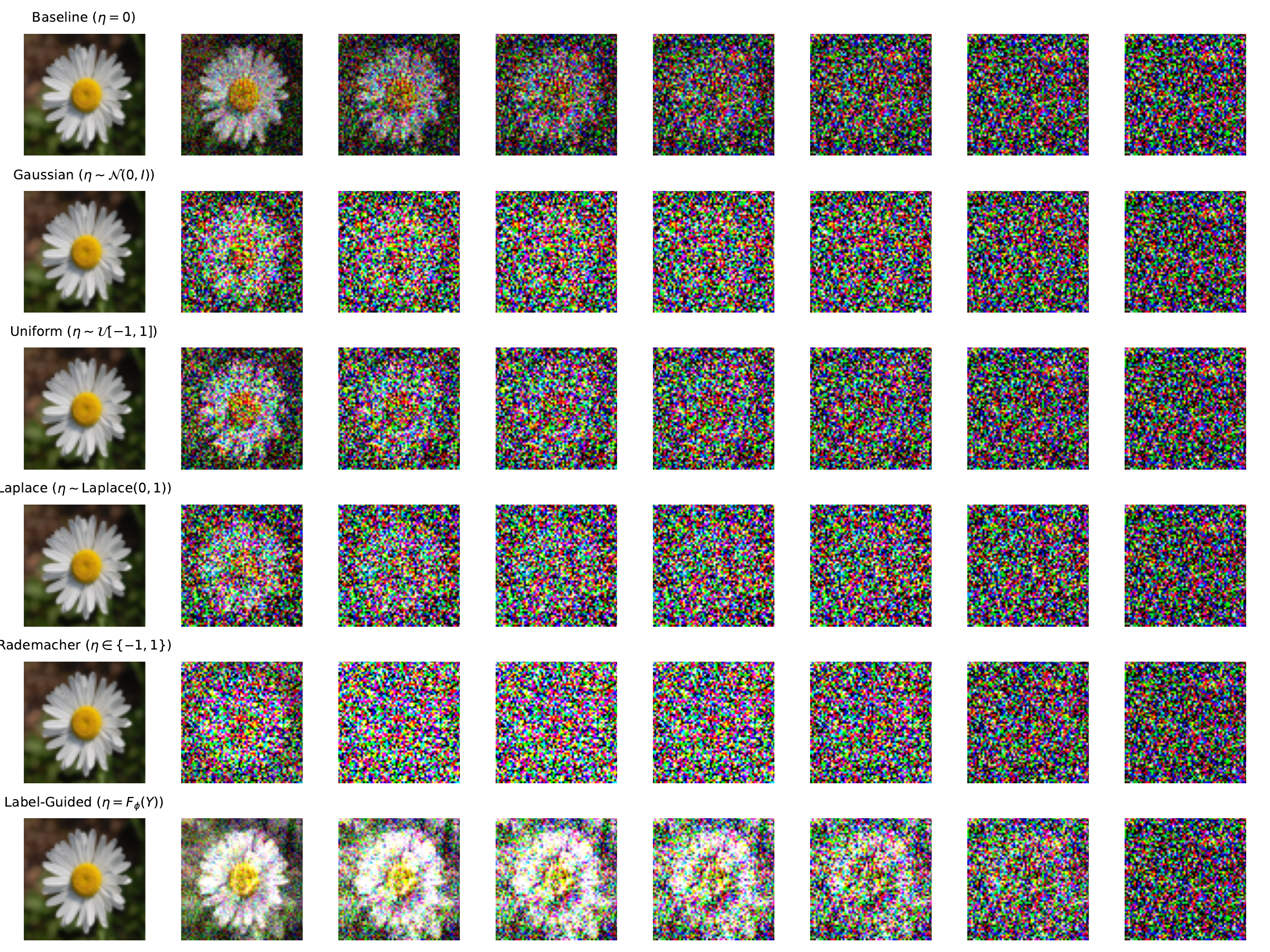} 
    \caption{\textbf{Generative trajectories of AuxPath-FM with diverse auxiliary distributions.} 
    The figure illustrates the interpolation paths $X_t = a(t)X_1 + b(t)X_0 + c(t)\eta$ between two data samples $X_0$ and $X_1$. 
    By varying the distribution of the auxiliary variable $\eta$, we observe unique geometric properties: 
    (a) Baseline ($\eta=0$), (b) Gaussian $\eta \sim \mathcal{N}(0, I)$, (c) Uniform $\eta \sim \mathcal{U}([-1, 1]^d)$, 
    (d) Laplace $\eta \sim \text{Laplace}(0, 1)$, (e) Rademacher $\eta \in \{-1, 1\}^d$, 
    and (f) Label-Guided $\eta = F_\phi(Y)$. 
    Note how discrete and heavy-tailed priors impart distinct structural biases to the flow.}
    \label{fig:auxpath_interpolation}
\end{figure}

We show that this construction preserves the continuity equation and yields a flow matching objective consistent with the marginal formulation~\cite{lipman2023flow,albergo2023stochastic}. A particularly important instance arises when $\eta$ encodes semantic information such as class labels, enabling label-guided generation through the probability path itself. In contrast to conventional conditioning approaches and classifier-free guidance~\cite{ho2022classifier,dhariwal2021diffusion}, which operate through network-level modulation, our method embeds guidance directly into the trajectory design, leading to principled control of the generative dynamics. Overall, AuxPath-FM generalizes existing path-based frameworks such as stochastic interpolants and Schrödinger Bridge methods~\cite{albergo2023stochastic,de2021diffusion,chen2023schrodinger}, while retaining the standard flow matching training objective, thereby providing a unified and flexible perspective for structured continuous-time generative modeling.

\section{Related Work}

\paragraph{Diffusion and Flow Matching.}
Continuous-time generative models have achieved remarkable success in modeling complex data distributions~\cite{song2021score,ho2020ddpm,kingma2021variational}. Score-based diffusion models learn the gradient of the log-density and simulate reverse-time stochastic differential equations to transform noise into data samples~\cite{song2021score}. More recently, flow matching (FM) provides an alternative framework that directly learns the velocity field of a deterministic probability flow connecting a base distribution and a data distribution~\cite{lipman2023flow,liu2023rectified}. Instead of simulating stochastic processes, FM trains a neural network to match the velocity of a predefined probability path, enabling efficient training and flexible trajectory design~\cite{lipman2023flow,tong2023conditional}. Recent theoretical advances have further established that flow matching is naturally adaptive to manifold structures~\cite{kumar2026manifold}, and generalizations such as Transition Matching~\cite{shaul2025transition} have unified flow models with discrete-time Markovian transitions to improve sampling efficiency and quality.

\paragraph{Schrödinger Bridges and Stochastic Interpolants.}
Several works study generative modeling through stochastic interpolation between distributions~\cite{de2021diffusion,chen2023schrodinger,pooladian2023multimarginal,frogner2019learning}. Schrödinger Bridge methods formulate generative modeling as an entropy-regularized optimal transport problem, producing stochastic trajectories between endpoints~\cite{de2021diffusion,chen2023schrodinger,benamou2015iterative}. More recently, stochastic interpolants provide a unifying framework connecting diffusion and flow-based methods~\cite{albergo2023stochastic,huang2021stochastic}. In many such formulations, interpolation paths typically introduce auxiliary noise variables that follow Gaussian distributions~\cite{albergo2023stochastic,de2021diffusion}. However, recent extensions have begun to explore flow matching on non-Euclidean geometries, such as Riemannian manifolds~\cite{chen2024geometries} and periodic lattices for material discovery~\cite{miller2024flowmm}, where the geometry of the space dictates the design of the interpolation path beyond simple Euclidean Gaussianity.

\paragraph{Alternative Forward Transformations and Non-Gaussian Paths.}
Some works explore relaxing the standard Gaussian noise assumption. For example, Cold Diffusion replaces Gaussian noise with deterministic image transformations~\cite{hoogeboom2021cold}. Other works investigate generalizing diffusion beyond Gaussian perturbations via structured noise models~\cite{song2021score,kingma2021variational}. Notably, recent developments in discrete state-spaces have introduced Dirichlet Flow Matching~\cite{stark2024dirichlet} and Discrete Flow Matching~\cite{gat2024discrete,campbell2024generative}, which utilize probability paths on simplices or categorical spaces to handle discrete sequential data. These studies suggest that generative modeling can be viewed as learning to reverse a sequence of transformations. Our work further generalizes this perspective by allowing the auxiliary path components to follow any arbitrary distribution, providing a unified view of both continuous and discrete-inspired priors.

\paragraph{Conditional Generative Modeling and Guidance.}
Conditional generative models incorporate side information such as class labels or attributes to enable controllable synthesis~\cite{mirza2014cgan,dhariwal2021diffusion,ho2022classifier}. A common strategy conditions the neural network on label embeddings while keeping the probability path unchanged~\cite{mirza2014cgan,dhariwal2021diffusion}. Classifier-free guidance further improves results by combining conditional and unconditional models during sampling~\cite{ho2022classifier}. Recent methods extend this to complex scientific domains, such as sequence-conditioned protein structure generation in FoldFlow-2~\cite{huguet2024foldflow2} and contrastive flow matching to enhance condition separation~\cite{stoica2025contrastive}. While some approaches explore augmenting flows with auxiliary variables to reduce training variance~\cite{lee2024augmented}, most current techniques primarily introduce conditioning through network inputs rather than modifying the geometric structure of the probability paths themselves.

\paragraph{Our Contribution.}
In contrast to existing approaches, we introduce \textbf{Flow Matching with Arbitrary Auxiliary Paths (AuxPath-FM)}, which generalizes probability path design in flow matching by allowing the auxiliary component of the path to follow an arbitrary distribution.
This perspective unifies several existing interpolation schemes as special cases while enabling auxiliary variables, such as label-induced distributions, to directly shape the generative trajectory.
As a result, our framework provides a principled mechanism for controllable generation while preserving the efficiency and simplicity of the flow matching objective.

\section{Background}

\paragraph{Flow Matching.}
Flow Matching provides a simple and scalable framework for training continuous-time generative models~\cite{lipman2023flow,liu2023rectified}.
Instead of learning the score function of a stochastic diffusion process~\cite{song2021score,ho2020ddpm}, it directly parameterizes a velocity field that transports samples from a base distribution $p_0(x)$ to a target data distribution $p_1(x)$ along a predefined probability path $p_t(x)$ for $t \in [0,1]$.

Let $X_0 \sim p_0$ and $X_1 \sim p_1$. A common choice of interpolation is the linear path $X_t = (1-t)X_0 + tX_1$, whose time derivative is $\dot{X}_t = X_1 - X_0$.
The objective of Flow Matching is to learn a velocity field $v_\theta(x,t)$ that matches this conditional velocity:
\begin{equation}
\mathcal{L}_{FM} =
\mathbb{E}_{t,X_0,X_1}
\left[
\| v_\theta(X_t,t) - (X_1 - X_0) \|^2
\right].
\end{equation}

After training, generation is performed by solving the ordinary differential equation (ODE)
\begin{equation}
\frac{dX_t}{dt} = v_\theta(X_t,t),
\end{equation}
which transports samples from $p_0$ to $p_1$.

\paragraph{Conditional Flow Matching.}
Flow Matching can be extended to conditional generative modeling by introducing an auxiliary variable $Y$. (e.g., class labels or side information)~\cite{tong2023conditional}.
The velocity field is conditioned as $v_\theta(x,t,Y)$, while the interpolation path typically remains $X_t = (1-t)X_0 + tX_1$, where $X_0 \sim p_0$ and $X_1 \sim p_1(\cdot|Y)$.

The resulting objective becomes
\begin{equation}
\mathcal{L}_{CFM} =
\mathbb{E}_{t,X_0,X_1,Y}
\left[
\| v_\theta(X_t,t,Y) - (X_1 - X_0) \|^2
\right].
\end{equation}

Although conditioning is incorporated through the network, the probability path itself is independent of $Y$.
Thus, side information influences the learned dynamics only implicitly via the parameterization of $v_\theta$, rather than through the path design.

\paragraph{Stochastic Interpolants and Schrödinger Bridges.}
Recent works extend deterministic interpolation by introducing stochasticity along the path~\cite{albergo2023stochastic,de2021diffusion,chen2023schrodinger}.
A general stochastic interpolant can be written as
\begin{equation}
X_t = (1-t)X_0 + tX_1 + \sigma(t) Z,
\end{equation}
where $X_0 \sim p_0$, $X_1 \sim p_1$, and $Z \sim \mathcal{N}(0, I)$.

Such formulations arise in stochastic interpolant models, diffusion models, and Schrödinger bridge methods~\cite{song2021score,de2021diffusion,chen2023schrodinger}.
The additive Gaussian term $\sigma(t) Z$ injects noise, enabling more flexible transport between distributions while preserving tractable training objectives.

However, existing approaches typically restrict the auxiliary noise to be Gaussian.
This constraint limits the expressiveness of the interpolation path and prevents incorporating more structured auxiliary signals that could better guide the generative dynamics.

\section{Method}

In this section, we introduce \textbf{Flow Matching with Arbitrary Auxiliary Paths (AuxPath-FM)}, a general framework that extends conditional flow matching by incorporating auxiliary variables drawn from arbitrary distributions into probability paths. Unlike conventional approaches that inject auxiliary signals only through network inputs, our formulation allows these signals to directly shape the generative trajectory.

\section{Flow Matching with Arbitrary Auxiliary Paths}

In this section, we introduce \textbf{Flow Matching with Arbitrary Auxiliary Paths (AuxPath-FM)}, a general framework that extends conditional flow matching by incorporating auxiliary variables drawn from arbitrary distributions into probability paths. Unlike conventional approaches that inject auxiliary signals only through network inputs, our formulation allows these signals to directly shape the generative trajectory.

\subsection{Auxiliary Probability Paths}
\label{sub:A-FM}

Standard flow matching constructs paths that interpolate between base samples $X_0 \sim p_0$ and data samples $X_1 \sim p_1$. To enhance path flexibility, we introduce an auxiliary random variable $\eta \sim p_{\eta}$ drawn from a general distribution, and define
\begin{equation}
    X_t = a(t)X_1 + b(t)X_0 + c(t)\eta,
\end{equation}
where $a(t), b(t), c(t)$ are time-dependent scalar functions.

To ensure correct boundary behavior, the coefficients satisfy
$a(0)=0, b(0)=1, c(0)=0$ and $a(1)=1, b(1)=0, c(1)=0$, such that the path transports samples from $p_0$ to $p_1$. This formulation enables the trajectory to incorporate structured signals beyond standard noise.

\paragraph{Conditional probability paths.}
Conditioned on $(X_0, \eta)$, the trajectory induces a conditional probability path $p_t(x \mid x_0, \eta)$ together with a conditional vector field $u_t(x \mid x_0, \eta)$ that governs the dynamics of $X_t$. This provides a family of simple paths indexed by $(x_0, \eta)$.

\paragraph{Marginal probability path.}
The overall probability path is obtained by marginalizing over the joint distribution of $(X_0, \eta)$:
\begin{equation}
    p_t(x) = \iint p_t(x \mid x_0, \eta)\, p_0(x_0)\, p_{\eta}(\eta)\, dx_0\, d\eta.
\end{equation}

\paragraph{Marginal vector field.}
Analogously, we define the marginal velocity field by aggregating conditional vector fields:
\begin{equation}
    u_t(x) =
    \frac{
    \iint p_t(x \mid x_0, \eta)\, u_t(x \mid x_0, \eta)\, p_0(x_0)\, p_{\eta}(\eta)\, dx_0\, d\eta
    }{p_t(x)}.
\end{equation}

\begin{theorem}[Consistency of Marginal Flow]
\label{thm:marginal_flow_aux}
Let $u_t(x \mid x_0, \eta)$ be conditional vector fields that generate conditional probability paths $p_t(x \mid x_0, \eta)$ satisfying the continuity equation. Then the marginal vector field $u_t(x)$ defined above generates the marginal probability path $p_t(x)$, i.e.,
\begin{equation}
    \partial_t p_t(x) = - \nabla_x \cdot \big( p_t(x)\, u_t(x) \big).
\end{equation}
\end{theorem}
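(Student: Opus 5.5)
The plan is to follow the standard marginalization argument of Lipman et al., with the conditioning variable $z=(x_0,\eta)$ and the product measure $q(z)=p_0(x_0)\,p_\eta(\eta)$ in place of the usual $p_1$.

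First, I differentiate the marginal path under the integral sign:
\[
\partial_t p_t(x)=\iint \partial_t p_t(x\mid x_0,\eta)\,p_0(x_0)\,p_\eta(\eta)\,dx_0\,d\eta .
\]
Nothing depends on $t$ except the conditional density, because $p_0$ and $p_\eta$ are time-independent. Next, I substitute the conditional continuity equation
\[
\partial_t p_t(x\mid x_0,\eta)=-\nabla_x\cdot\big(p_t(x\mid x_0,\eta)\,u_t(x\mid x_0,\eta)\big),
\]
which holds by hypothesis for each fixed $(x_0,\eta)$. I then pull the divergence, which acts only on $x$, outside the $(x_0,\eta)$-integral. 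This gives
\[
\partial_t p_t(x)=-\nabla_x\cdot\Big(\iint p_t(x\mid x_0,\eta)\,u_t(x\mid x_0,\eta)\,p_0(x_0)\,p_\eta(\eta)\,dx_0\,d\eta\Big).
\]
By the definition of the marginal vector field, the integrand inside the divergence equals $p_t(x)\,u_t(x)$ wherever $p_t(x)>0$, and this yields the claim.

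The main obstacle is justifying the two interchanges of differentiation and integration. For each, I will impose (or cite as a standing regularity assumption, as in Lipman et al.) that $p_t(x\mid x_0,\eta)$ and $p_t(x\mid x_0,\eta)\,u_t(x\mid x_0,\eta)$ are $C^1$ in $(t,x)$. Their derivatives must also be dominated by functions integrable against $p_0\otimes p_\eta$, locally uniformly in $(t,x)$, so that dominated convergence (Leibniz rule) applies.

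A subtlety specific to this setting is that $\eta$ may be discrete, e.g.\ Rademacher. In that case the $d\eta$ integral should be read as a sum, or more generally as integration against the law $p_\eta(d\eta)$, and the argument goes through verbatim.

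Two further points need care. First, at $t=0$ and $t=1$ the conditional law degenerates to a point mass because $a(0)=c(0)=0$ and $b(1)=c(1)=0$. I will therefore state the result for $t\in(0,1)$, or interpret the continuity equation weakly. In the weak formulation I test against $\varphi\in C_c^\infty$: I show that $\frac{d}{dt}\int\varphi\,p_t=\int\nabla\varphi\cdot u_t\,p_t$ by Fubini, which needs only integrability of $|u_t(x\mid x_0,\eta)|$ under the joint law. This weak version is the cleanest route to state rigorously. Second, on the set $\{p_t=0\}$, $u_t$ is defined arbitrarily, but the flux $p_t u_t$ is still zero there, so the identity holds everywhere.
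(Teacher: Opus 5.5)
Your proposal is correct and is essentially the paper's proof: differentiate the marginal under the integral, substitute the conditional continuity equation, pull $\nabla_x\cdot$ outside the $(x_0,\eta)$-integral, and recognize the resulting flux as $p_t u_t$. Your regularity, discrete-$\eta$, weak-formulation and $\{p_t=0\}$ remarks are extra rigor the paper omits, but one of them contains a slip: conditioned on $(x_0,\eta)$, only the $t=0$ law is a point mass, while at $t=1$ the conditions $b(1)=c(1)=0$ make the conditional law equal to $p_1$ itself, independent of $(x_0,\eta)$ (your restriction to $t\in(0,1)$ or to the weak form is harmless either way).
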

The detailed proof follows from standard properties of conditional continuity equations and marginalization of probability flows. We defer the complete derivation to Appendix~\ref{sec:AuxPath_model}.

\paragraph{Implication.}
Theorem~\ref{thm:marginal_flow_aux} shows that the global flow can be decomposed into a mixture of simpler conditional flows. This provides a principled justification for designing complex probability paths via auxiliary variables.

\paragraph{Choice of auxiliary distribution.}
Importantly, the auxiliary distribution $p_{\eta}$ is not restricted to a specific form. For instance, $\eta$ can be drawn from Gaussian distributions, bounded distributions, heavy-tailed distributions, or even discrete distributions. This flexibility allows the probability path to incorporate diverse structural priors.
\input{code}

Different choices of $p_{\eta}$ correspond to different mixtures of conditional flows, thereby inducing distinct geometric properties of the overall trajectory. We empirically investigate the impact of different choices of $p_{\eta}$ in Sec.~\ref{sub:toy}.

\subsection{Velocity Field and Training Objective}

Let $v_\theta(x,t)$ denote the neural network parameterizing the velocity field. 
Under the auxiliary probability path defined in Sec.~\ref{sub:A-FM}, the trajectory admits the time derivative
\begin{equation}
    \dot{X}_t = \dot{a}(t)X_1 + \dot{b}(t)X_0 + \dot{c}(t)\eta.
\end{equation}

\paragraph{From marginal to conditional objectives.}
In principle, one may define a flow matching objective on the marginal probability path $p_t(x)$:
\begin{equation}
    \mathcal{L}_{\mathrm{FM}} =
    \mathbb{E}_{t, x \sim p_t}
    \left[
    \| v_\theta(x,t) - u_t(x) \|^2
    \right],
\end{equation}
where $u_t(x)$ is the marginal vector field defined in Sec.~\ref{sub:A-FM}.

However, both $p_t(x)$ and $u_t(x)$ involve intractable integrals over $(X_0, \eta)$, making it impractical to directly compute unbiased estimates of this objective.

\paragraph{AuxPath Conditional Flow Matching.}
To address this issue, we instead optimize a conditional objective defined on the tractable conditional paths:
\begin{equation}
    \mathcal{L}_{\mathrm{AuxPath}} =
    \mathbb{E}_{t, X_0, X_1, \eta}
    \left[
    \| v_\theta(X_t,t) - \dot{X}_t \|^2
    \right],
\end{equation}
where $X_t = a(t)X_1 + b(t)X_0 + c(t)\eta$.
\input{code1}

This objective corresponds to matching the conditional vector field $u_t(x \mid X_0, \eta)$, which can be computed analytically for each sample. As a result, $\mathcal{L}_{\mathrm{AuxPath}}$ admits efficient and unbiased Monte Carlo estimation.

\paragraph{Equivalence to marginal flow matching.}
Although $\mathcal{L}_{\mathrm{AuxPath}}$ is defined on conditional paths, it is in fact equivalent to the marginal flow matching objective in expectation.

\begin{theorem}[Loss Equivalence]
\label{thm:loss_equivalence_main}
Assume $p_t(x) > 0$ for all $x$ and $t \in [0,1]$. Then, up to a constant independent of $\theta$,
\begin{equation}
    \mathcal{L}_{\mathrm{FM}}(\theta)
    =
    \mathcal{L}_{\mathrm{AuxPath}}(\theta)
    + C,
\end{equation}
and consequently,
\begin{equation}
    \nabla_\theta \mathcal{L}_{\mathrm{FM}}(\theta)
    =
    \nabla_\theta \mathcal{L}_{\mathrm{AuxPath}}(\theta).
\end{equation}
\end{theorem}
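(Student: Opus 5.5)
We follow the standard conditional flow matching argument of \cite{lipman2023flow,tong2023conditional}, with the conditioning variable now the triple $Z=(X_0,X_1,\eta)$. (Conditioning on $(X_0,\eta)$ alone does not make $\dot X_t$ deterministic, since $X_1$ is still random.) Given $Z=z$, the path $X_t=a(t)x_1+b(t)x_0+c(t)\eta$ is a point, so the conditional velocity $\dot X_t=\dot a(t)x_1+\dot b(t)x_0+\dot c(t)\eta$ is a deterministic function of $(t,z)$. The key identity is that the marginal field $u_t(x)$ of Sec.~\ref{sub:A-FM} equals the conditional expectation $\mathbb{E}[\dot X_t\mid X_t=x]$. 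We obtain this by writing $p_t(x)=\mathbb{E}_Z[\delta(x-X_t)]$ (or the corresponding density formula) and checking that the defining ratio for $u_t(x)$ is exactly the Bayes form of $\mathbb{E}[\dot X_t\mid X_t=x]$. Theorem~\ref{thm:marginal_flow_aux} is then what guarantees this $u_t$ is the field generating $p_t$, so $\mathcal{L}_{\mathrm{FM}}$ is the meaningful target.

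Next we expand both losses in the square. For each fixed $t$,
\[
\|v_\theta(X_t,t)-u_t(X_t)\|^2=\|v_\theta\|^2-2\langle v_\theta,u_t(X_t)\rangle+\|u_t(X_t)\|^2,
\]
\[
\|v_\theta(X_t,t)-\dot X_t\|^2=\|v_\theta\|^2-2\langle v_\theta,\dot X_t\rangle+\|\dot X_t\|^2 .
\]
The first terms have identical expectations, because $X_t\sim p_t$ under both sampling schemes; this is just the definition of the marginal path. The last terms do not depend on $\theta$. Together they give the constant $C=\mathbb{E}\|u_t(X_t)\|^2-\mathbb{E}\|\dot X_t\|^2$, which is $-\mathbb{E}_t\,\mathbb{E}\,\mathrm{tr}\,\mathrm{Cov}(\dot X_t\mid X_t)$ and is finite under a second-moment assumption. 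For the cross terms we apply the tower property:
\[
\mathbb{E}\langle v_\theta(X_t,t),\dot X_t\rangle=\mathbb{E}\big\langle v_\theta(X_t,t),\mathbb{E}[\dot X_t\mid X_t]\big\rangle=\mathbb{E}\langle v_\theta(X_t,t),u_t(X_t)\rangle .
\]
Integrating over $t$ gives $\mathcal{L}_{\mathrm{FM}}=\mathcal{L}_{\mathrm{AuxPath}}+C$. The gradient identity follows by differentiating under the integral, since $C$ does not depend on $\theta$.

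The main obstacle is rigor in identifying $u_t$ with the conditional expectation when $\eta$ is arbitrary, possibly discrete like Rademacher. In that case $p_t(x\mid x_0,\eta)$ may be a Dirac mass and have no density. We would handle this measure-theoretically. The positivity assumption $p_t>0$ makes the ratio well defined. We interpret $u_t$ as the Radon--Nikodym derivative of the vector measure $\mathbb{E}[\dot X_t\,\mathbf 1\{X_t\in\cdot\}]$ with respect to the law of $X_t$, which is precisely $\mathbb{E}[\dot X_t\mid X_t]$. We also assume the integrability conditions $\mathbb{E}\|\dot X_t\|^2<\infty$ and $\mathbb{E}\|v_\theta\|^2<\infty$, plus enough regularity to exchange $\nabla_\theta$ with the expectation, for example dominated convergence.
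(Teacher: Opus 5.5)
Your proposal is correct and follows the paper's argument: expand both squares, use the tower property $\mathbb{E}\langle v_\theta(X_t,t),\dot X_t\rangle=\mathbb{E}\langle v_\theta(X_t,t),u_t(X_t)\rangle$ on the cross term, and absorb the $\theta$-independent terms into $C=\mathbb{E}\|u_t(X_t)\|^2-\mathbb{E}\|\dot X_t\|^2$. The paper simply takes $u_t(x)=\mathbb{E}[\dot X_t\mid X_t=x]$ as a definition. You instead justify this identity, which holds once $u_t(x\mid x_0,\eta)$ is taken to be the canonical field $\mathbb{E}[\dot X_t\mid X_t=x,X_0=x_0,\eta]$, and you also state the integrability and interchange conditions that the paper leaves implicit.
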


\paragraph{Implication.}
Theorem~\ref{thm:loss_equivalence_main} shows that optimizing the conditional objective yields the same optimal solution as the intractable marginal objective. Therefore, we can learn a velocity field that generates the marginal probability path $p_t(x)$—and in particular approximates the data distribution at $t=1$—without explicitly computing either $p_t(x)$ or $u_t(x)$.

\subsection{Semantic Auxiliary Variables for Conditional Generation}

A natural extension of AuxPath-FM is to consider auxiliary variables that encode semantic information, enabling conditional generation. In this setting, we define $\eta = F_\phi(Y)$, where $Y$ denotes a class label and $F_\phi$ is a lightweight neural network.

To provide meaningful guidance, we employ \textbf{prototype learning}, training $F_\phi$ to approximate the class-wise centroid:
\begin{equation}
F_\phi(Y) \approx \mathbb{E}[X_1 \mid Y].
\end{equation}
This is implemented by minimizing
\begin{equation}
\mathcal{L}_{F} =
\mathbb{E}_{X_1,Y}
\left[
\left\|
F_\phi(Y) - X_1
\right\|^2
\right].
\end{equation}

This encourages $F_\phi(Y)$ to capture the average structure of each class, effectively guiding trajectories toward class-specific regions of the data distribution.

Within the AuxPath-FM framework, this corresponds to instantiating the auxiliary variable $\eta$ in the probability path with a label-dependent representation. The resulting trajectories are therefore biased toward semantically consistent regions throughout the flow, rather than only at the endpoint.

Notably, since $F_\phi$ is significantly smaller than the main velocity network, this additional objective introduces negligible computational overhead. More generally, the auxiliary variable $\eta$ can encode diverse forms of structured information beyond labels, making the framework broadly applicable to conditional and guided generative modeling.

\subsection{Trajectory-Level Classifier-Free Guidance}

AuxPath-FM admits a trajectory-level formulation of classifier-free guidance (CFG)~\cite{ho2022classifier}. 
Specifically, the conditional and unconditional velocity fields take the form
\begin{equation}
v(x,t,y) = v_\theta(x,t) + \dot{c}(t) F_\phi(y), \quad
v(x,t,\emptyset) = v_\theta(x,t) + \dot{c}(t) F_\phi(\emptyset).
\end{equation}
Applying standard CFG yields
\begin{equation}
v_{\mathrm{cfg}}(x,t)
= v_\theta(x,t) + \dot{c}(t)\Big[ F_\phi(\emptyset) + w\big(F_\phi(y)-F_\phi(\emptyset)\big) \Big].
\end{equation}

This formulation decouples guidance from the backbone model: unlike conventional CFG, which requires two evaluations of $v_\theta$, our approach reuses a single evaluation and applies guidance through the lightweight module $F_\phi$. 
Full derivation and discussion are provided in Appendix~\ref{sec:auxpath_cfg}.

\subsection{Toy Experiment: Visualizing Trajectory-Level Guidance}
\label{subsec:toy_cfg}
\begin{wrapfigure}{r}{0.45\linewidth}
\vspace{-1.0em}
\centering
\includegraphics[width=\linewidth]{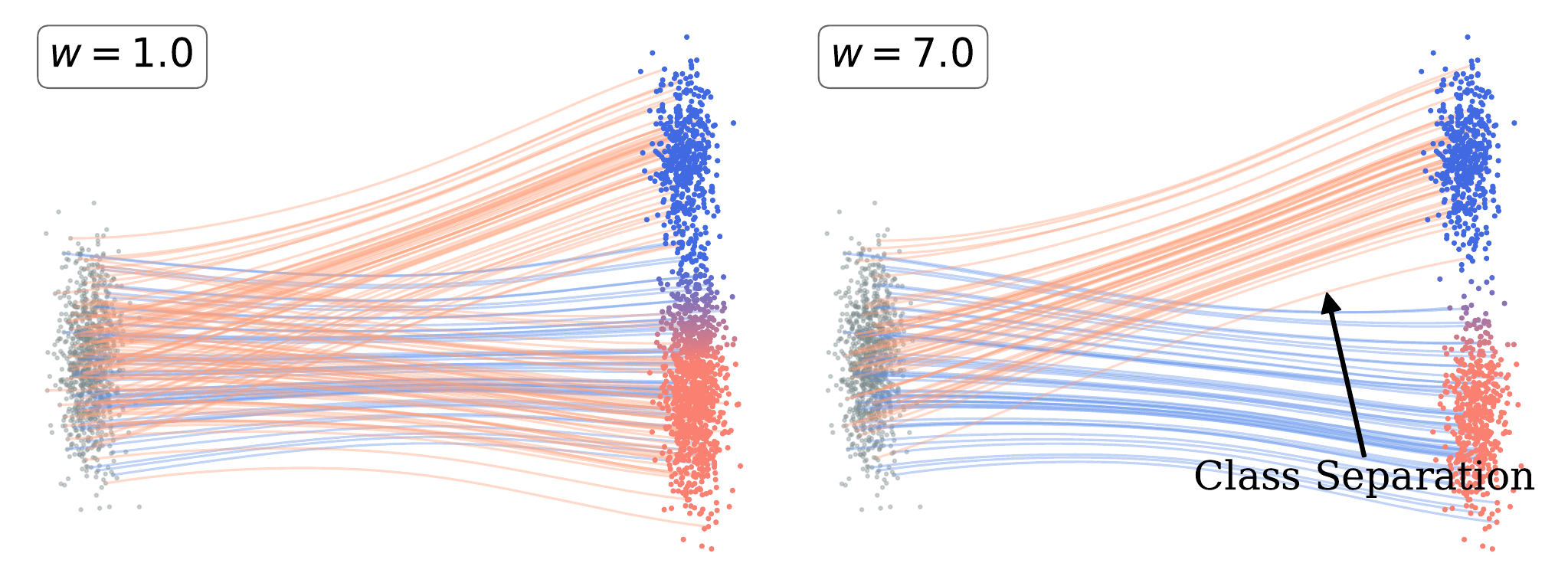}
\vspace{-0.8em}
\caption{\textbf{Trajectory-level guidance.} 
Comparison between $w=1.0$ and $w=7.0$. Higher guidance leads to clear trajectory separation.}
\label{fig:auxpath_cfg_toy}
\vspace{-1.0em}
\end{wrapfigure}

We illustrate trajectory-level CFG in a 2D toy setting with a bimodal Gaussian dataset (two clusters on a ring). 
The flow model is trained with AuxPath-FM, and guidance is applied at sampling using Algorithm~\ref{alg:auxpath_cond_cfg}.

As shown in Fig.~\ref{fig:auxpath_cfg_toy}, increasing the guidance scale to $w=7$ leads to clear separation of trajectories across modes. 
This confirms that CFG can be effectively applied at the trajectory level, where the auxiliary term acts as a global drift without requiring repeated evaluations of the backbone model.

\subsection{Toy Experiment: Ring-64 Dataset}
\label{sub:toy}

We evaluate AuxPath-FM on a ring-64 dataset with 64 modes on a unit circle, requiring finer multi-modal modeling than ring-8 and providing a stricter test for auxiliary design.

All methods share the same velocity network $v_\theta(X_t,t,Y)$ and differ only in the auxiliary variable $\eta$. CFM uses $X_t=(1-t)X_0+tX_1$, while AuxPath-FM adds $t(1-t)\eta$, with $X_0\sim\mathcal{N}(0,I)$ and $X_1\sim p_1$. We consider Gaussian, Uniform, Laplace, Rademacher, and label-guided $\eta = F(Y)$, where $F_\phi(Y)\approx\mathbb{E}[X\mid Y]$.

We report mode accuracy (Acc), i.e., correct assignment to the nearest mode, and distance error (Error), i.e., average distance to mode centers. As shown in Fig.~\ref{fig:ring64_auxpath}, all variants preserve correct transport, while Table~\ref{tab:ring64_results} shows that label-guided $\eta$ achieves the best performance, with higher Acc and lower Error, indicating improved controllable multi-modal generation.

\begin{figure}[h]
\centering
\scalebox{0.98}{
\begin{minipage}{0.48\columnwidth}
    \centering
    \includegraphics[width=\linewidth]{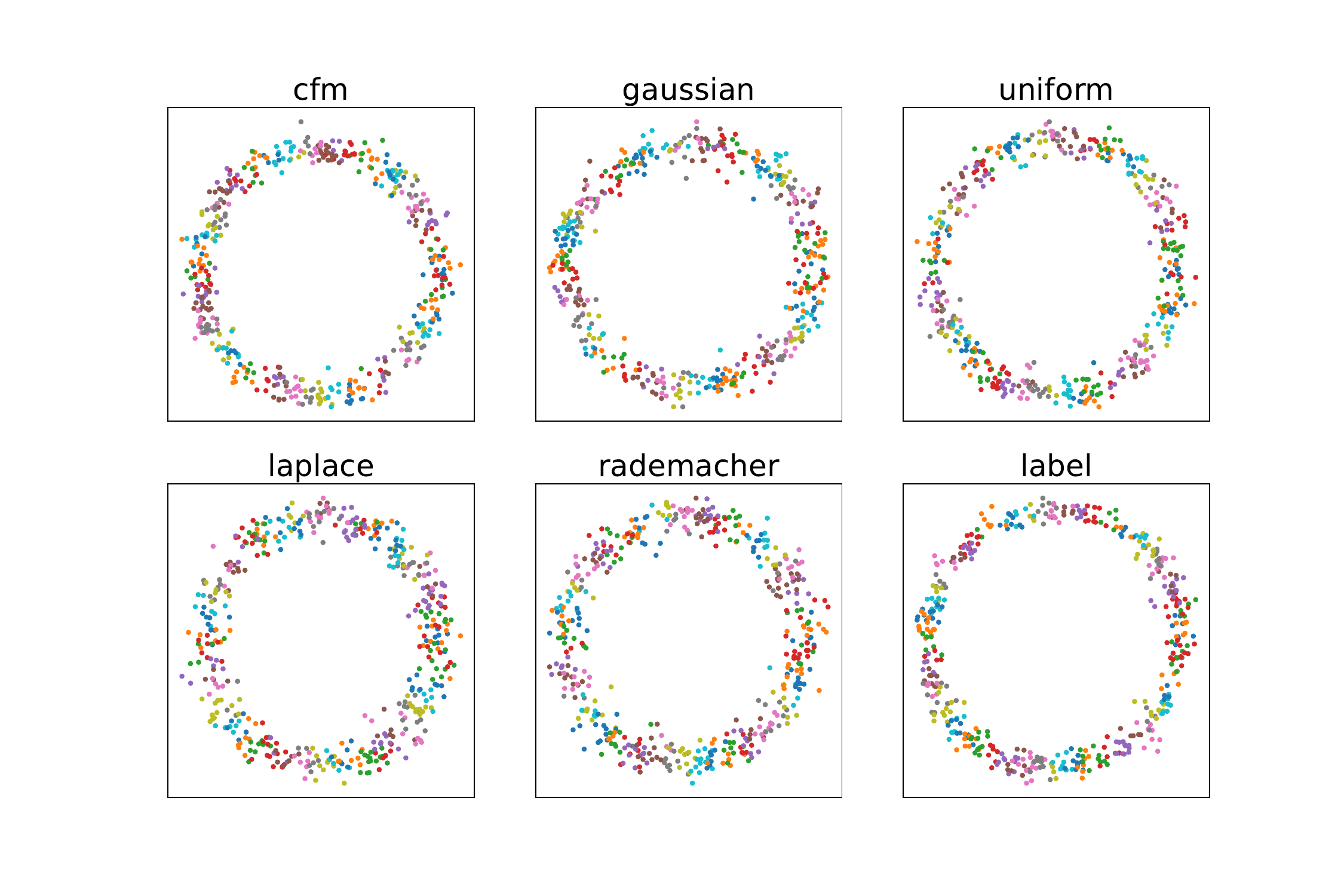}
    \caption{
    Generated trajectories on the ring-64 dataset using different auxiliary distributions \(\eta\). }
    \label{fig:ring64_auxpath}
\end{minipage}
\hfill
\begin{minipage}{0.48\columnwidth}
    \centering
    \setlength{\tabcolsep}{2pt}
    \captionof{table}{Evaluation metrics on the ring-64 dataset. Higher accuracy and lower error indicate better performance.}
    \label{tab:ring64_results}
    \begin{tabular}{lcc}
    \hline
    Auxiliary Type & Acc (\%) $\uparrow$ & Error $\downarrow$ \\
    \hline
    CFM (baseline)       & 57.20 & 0.3398 \\
    Gaussian             & 45.60 & 0.3956 \\
    Uniform              & 49.80 & 0.3586 \\
    Laplace              & 47.20 & 0.4228 \\
    Rademacher           & 51.00 & 0.3911 \\
    Label-guided (Ours)  & \textbf{60.60} & \textbf{0.2955} \\
    \hline
    \end{tabular}
\end{minipage}}
\end{figure}

\section{Experiments}

We evaluate AuxPath-FM on multiple datasets to validate its effectiveness in conditional generation and the flexibility of auxiliary path design. All experiments are conducted on a single NVIDIA RTX 4090 GPU.

\subsection{Experimental Setup}
\begin{wrapfigure}{r}{0.5\linewidth}
\vspace{-1.0em}
\centering
\includegraphics[width=\linewidth]{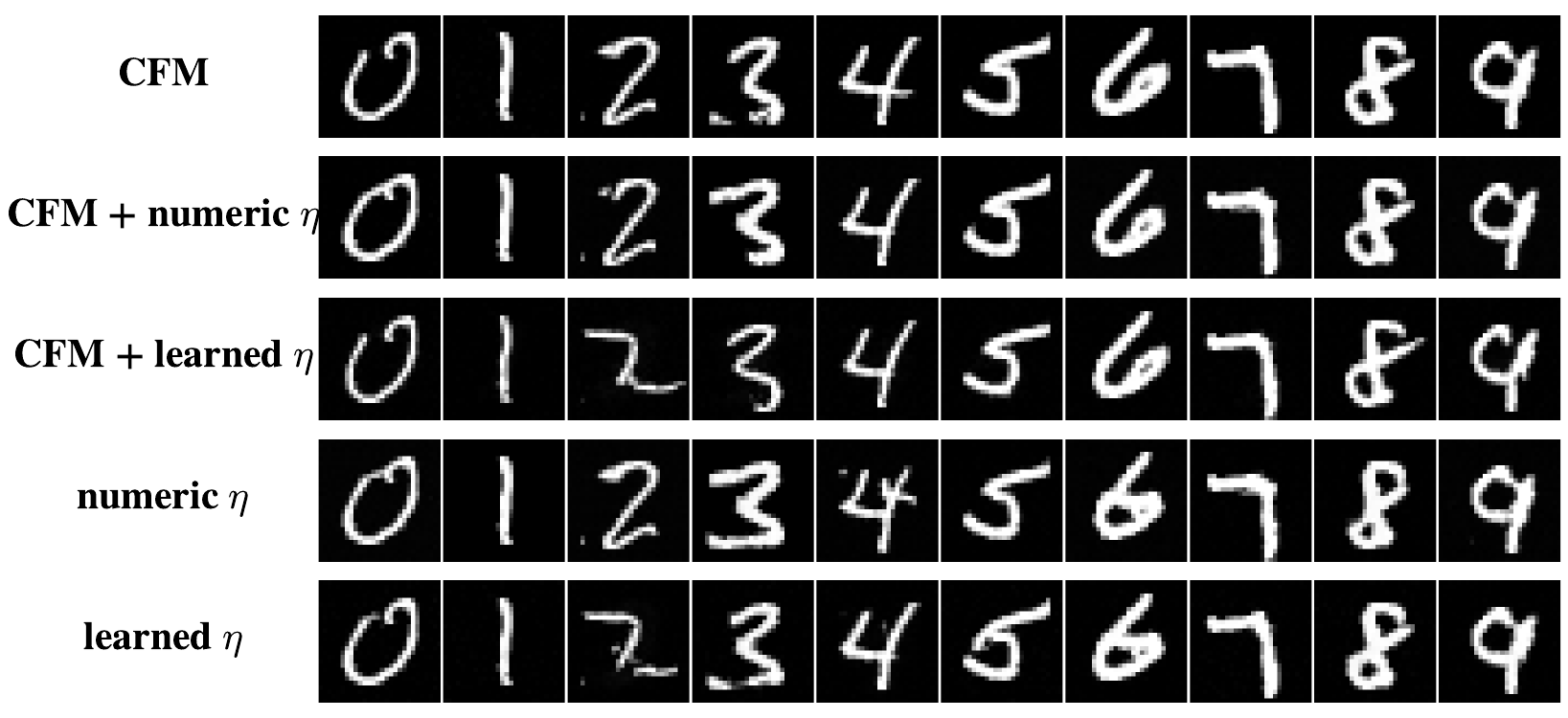}
\vspace{-0.8em}
\caption{\textbf{Qualitative comparison of conditional generation on MNIST.} 
We compare generated samples across different model variants: standard CFM (baseline), path-guided models with fixed numeric auxiliary variables ($\eta = \text{numeric}$), and AuxPath-FM using learned label prototypes ($\eta = F_\phi(Y)$).}
\label{fig:mnist_comparison}
\vspace{-1.0em}
\end{wrapfigure}
\paragraph{Datasets.}
We consider three datasets:MNIST \cite{lecun1998mnist}, CIFAR-10 \cite{krizhevsky2009cifar}, and ImageNet-1k \cite{imagenet2009} are used to evaluate conditional generation accuracy, while ImageNet-1k is used to test scalability and distribution-agnostic auxiliary paths.

\paragraph{Evaluation Metrics.}
We report the following metrics:
(i) Classification Accuracy (Acc), computed using pretrained classifiers to evaluate conditional generation quality. 
For MNIST, we adopt a CNN classifier~\cite{cnn_mnist_ref} achieving 99.1\% accuracy; for CIFAR-10, we use a ResNet-56 classifier~\cite{he2016resnet} with 92\% accuracy. 
These classifiers are used to assess whether generated samples match the intended labels. 
(ii) \textbf{Fr\'{e}chet Inception Distance (FID)} \cite{heusel2017fid}, \textbf{Spatial FID (sFID)} \cite{parmar2022sfid}, and \textbf{Inception Score (IS)} \cite{salimans2016is} for generative quality evaluation. 
All metrics are computed using 50k generated samples following standard evaluation protocols.

\subsection{Conditional Generation Performance (MNIST and CIFAR-10)}
\label{sub:1}

\paragraph{Experimental setup.}
For low-resolution datasets, we adopt different backbones for each setting: a standard \textbf{U-Net}~\cite{ronneberger2015u,ho2020ddpm} for CIFAR-10, and \textbf{LightningDiT}~\cite{yao2025reconstruction} for MNIST. 
For a fair comparison, all methods (including CFM and its variants) are trained for 400K steps under identical settings. 
Our conditional AuxPath-FM follows the two-stage training procedure in Algorithm~\ref{alg:auxpath_cond_train} and performs sampling using Algorithm~\ref{alg:auxpath_cond_sample}. 
Importantly, the learned velocity model $v_\theta$ is trained to match the baseline flow field without the auxiliary component, while the auxiliary variable $\eta = F_\phi(Y)$ is incorporated only through the probability path during training and reintroduced at inference time to construct the full dynamics via $v_\theta(x,t) + \dot{c}(t)\eta$. 
Qualitative results on MNIST are shown in Fig.~\ref{fig:mnist_comparison}, comparing standard CFM (pure conditional guidance), auxiliary path guidance alone (via $\eta$), and their combination (CFM + $\eta$), where $\eta$ is instantiated either as numeric values (fixed numeric filling) or learned semantic prototypes.

\begin{table}[t]
\centering
\caption{Results on MNIST and CIFAR-10. Acc is reported using pretrained classifiers.}
\begin{tabular}{lcccccccc}
\toprule
& \multicolumn{4}{c}{MNIST} & \multicolumn{4}{c}{CIFAR-10} \\
\cmidrule(lr){2-5} \cmidrule(lr){6-9}
Method 
& Acc(\%) $\uparrow$ & FID $\downarrow$ & sFID $\downarrow$ & IS $\uparrow$
& Acc(\%) $\uparrow$ & FID $\downarrow$ & sFID $\downarrow$ & IS $\uparrow$ \\
\midrule
CFM & 95.5& \cellcolor{gray!15}\textbf{5.90} & \cellcolor{gray!15}\textbf{0.006}& 2.099 &82.24 &4.71 & 0.0032 & 9.18 \\
CFM + numeric $\eta$ & \cellcolor{gray!15}\textbf{98.1}&7.33 & 0.007 & 2.096 & 83.61 & 4.54 & 0.0030 & 9.19 \\
CFM + learned $\eta$ & 97.1 & 7.57 & 0.008 & 2.112 & \cellcolor{gray!15}\textbf{87.28} & \cellcolor{gray!15}\textbf{3.68} & \cellcolor{gray!15}\textbf{0.0017} & \cellcolor{gray!15}\textbf{9.20} \\
numeric $\eta$ & 94.3 &10.04  & 0.009 &2.097 & 79.18 & 5.73 & 0.0038 & 8.89 \\
learned $\eta$ & 90.6 & 13.82  & 0.015 &\cellcolor{gray!15}\textbf{2.128}&  77.03 & 5.90 & 0.0043 & 8.93 \\
\bottomrule
\end{tabular}
\label{tab:mnist_cifar_results}
\end{table}

\paragraph{Results.}
As shown in Fig.~\ref{fig:mnist_comparison} and Table~\ref{tab:mnist_cifar_results}, both standard conditional guidance (CFM) and auxiliary path guidance (via $\eta$) can independently achieve conditional generation, while auxiliary guidance alone (numeric $\eta$) yields slightly lower accuracy. 
When combined (CFM + $\eta$), the two mechanisms provide complementary effects and consistently improve semantic accuracy, demonstrating that auxiliary path guidance enhances standard conditional generation. 
Among different instantiations of $\eta$, learned semantic prototypes further improve consistency and fidelity compared to numeric ones. 
On MNIST, this results in the highest classification accuracy with competitive FID and sFID, and on CIFAR-10, consistent improvements are observed across all metrics, especially in FID and sFID.

\subsection{Conditional Flow Matching with Different Auxiliary Distributions}
\begin{wraptable}{r}{0.45\linewidth}
\vspace{-1.0em}
\centering
\small
\caption{Results on CIFAR-10 and ImageNet-1k.}
\label{tab:all_results}

\setlength{\tabcolsep}{2.5pt}
\begin{tabular}{lccc c}
\toprule
& \multicolumn{3}{c}{CIFAR-10} & ImageNet-1k \\
\cmidrule(lr){2-4} \cmidrule(lr){5-5}
Method 
& FID $\downarrow$ & sFID $\downarrow$ & IS $\uparrow$
& FID $\downarrow$ \\
\midrule
\gc{CFM}                & \gc{4.71}  & \gc{0.0032} & \gc{9.18} & \gc{21.13} \\
Gaussian           & \cellcolor{gray!15}\textbf{4.38}  & 0.0029 & 9.21 & 24.19 \\
Uniform            & 4.40   & 0.0028    & 9.17  & 23.48 \\
Laplace            & 4.49  & 0.0031 & 9.19 & 24.71 \\
Rademacher         & 4.46  & 0.0031    & 9.24  & 24.23 \\
Label-guided & 4.59    & \cellcolor{gray!15}\textbf{0.0024}   & \cellcolor{gray!15}\textbf{9.32}  & \cellcolor{gray!15}\textbf{23.45} \\
\bottomrule
\end{tabular}

\vspace{-0.8em}
\end{wraptable}
\paragraph{Setup.}
To assess scalability to high-resolution manifolds ($256\times256$), we adopt a \textbf{Diffusion Transformer (DiT-B/2)}~\cite{peebles2023dit}. 
Following prior work, the flow model is initialized from a pre-trained conditional flow matching (CFM) checkpoint~\cite{dao2023flow} (originally trained for $\sim$10M steps) and further optimized for only 300K steps, corresponding to roughly 3\% of the original training budget. 
All methods share the same architecture and training protocol, and differ only in the choice of auxiliary variable $\eta$. 
AuxPath-FM is trained using Algorithm~\ref{alg:auxpath_train} and sampled using Algorithm~\ref{alg:auxpath_sample}. 
Importantly, $\eta$ is used only as an auxiliary component during training, while the learned velocity field $v_\theta$ preserves the underlying CFM dynamics and is directly used at inference time.

\paragraph{Results.}
Table~\ref{tab:all_results} reports FID across datasets. 
We observe that under various choices of auxiliary distributions (Gaussian, Uniform, Laplace, and Rademacher), the model consistently maintains stable generative performance, indicating that incorporating $\eta$ does not disrupt the conditional flow matching process. 
These results show that different auxiliary distributions can be flexibly integrated into CFM while retaining its ability to generate high-quality samples, demonstrating the robustness of the proposed framework.

\subsection{Effect of CFG Scale}
\label{subsec:cfg_effect}
\begin{wraptable}{r}{0.48\linewidth}
\vspace{-1.0em}
\centering
\small
\caption{Effect of CFG scale on CIFAR-10 and ImageNet-1k.}
\label{tab:cfg_scale_results}

\setlength{\tabcolsep}{2.5pt}
\begin{tabular}{c cccc c}
\toprule
& \multicolumn{4}{c}{CIFAR-10} & ImageNet-1k \\
\cmidrule(lr){2-5} \cmidrule(lr){6-6}
$w$
& FID $\downarrow$ & sFID $\downarrow$ & IS $\uparrow$ & Acc (\%) $\uparrow$
& FID $\downarrow$ \\
\midrule
1.0 & 4.14 & 0.0026 & 9.24 & 83.68 & 22.49 \\
1.2 & \cellcolor{gray!15}\textbf{3.85} & 0.0024 & \cellcolor{gray!15}\textbf{9.27} & 84.47 & 21.47 \\
1.5 & 4.10 & \cellcolor{gray!15}\textbf{0.0022} & \cellcolor{gray!15}\textbf{9.27} & 85.39 & 20.43 \\
2.0 &  6.36  &  0.0029    & 9.34  & \cellcolor{gray!15}\textbf{85.82}   & \cellcolor{gray!15}\textbf{19.81} \\
\bottomrule
\end{tabular}
\end{wraptable}
We analyze the effect of guidance scale $w$ under trajectory-level CFG induced by the combination of CFM and auxiliary distributions. 
The model is trained using Algorithm~\ref{alg:auxpath_cond_train} and sampled using Algorithm~\ref{alg:auxpath_cond_cfg}. 

As summarized in Table~\ref{tab:cfg_scale_results}, trajectory-level CFG derived from auxiliary paths exhibits behavior consistent with standard classifier-free guidance: increasing $w$ improves semantic alignment while maintaining competitive generative quality. 
Notably, this formulation preserves the effectiveness of CFG without requiring multiple evaluations of the backbone model, reducing the computational cost by approximately half.

At the same time, the underlying properties of CFG are retained, with $w$ controlling the trade-off between fidelity and semantic consistency in a similar manner to conventional approaches. 
Additional experimental results and analysis are provided in Appendix~\ref{sec:auxpath_finetune}.

\section{Conclusion}

This work establishes AuxPath-FM as a versatile generative modeling framework that accommodates arbitrary auxiliary distributions within the probability path. By encoding structured semantic information into these auxiliary components, the training objective achieves higher precision during velocity field parameterizing. Notably, this formulation enables trajectory-level classifier-free guidance (CFG) using only a single backbone evaluation, halving the inference cost compared to standard methods. Extensive experiments on MNIST, CIFAR-10, and ImageNet-1k validate that semantic-auxiliary CFG maintains superior generative quality while significantly enhancing computational efficiency.

\bibliographystyle{plainnat}  
\small
\bibliography{Reference}
\normalsize

\newpage
\appendix

\section{Proof of AuxPath-FM Consistency}
\label{sec:AuxPath_model}

We present an AuxPath formulation of AuxPath-FM that introduces an auxiliary variable into the probability path. Let $X_1 \sim p_{\mathrm{data}}$ denote data samples, $X_0 \sim p_{\mathrm{init}}$ denote base samples, and $\eta \sim p_\eta$ denote an auxiliary variable drawn from an arbitrary distribution. The probability path is defined as
\begin{equation}
X_t = a(t)X_1 + b(t)X_0 + c(t)\eta,
\end{equation}
where $a(t), b(t), c(t)$ satisfy boundary conditions
$a(0)=0$, $a(1)=1$, $b(0)=1$, $b(1)=0$, and $c(0)=c(1)=0$.

\subsection{Continuity Equation}

We provide a full derivation of the continuity equation under the AuxPath-FM formulation.

\subsubsection{Conditional dynamics}

Conditioned on $(X_0, \eta)$, the trajectory induces a conditional density $p_t(x \mid x_0, \eta)$ governed by the continuity equation:
\begin{equation}
\partial_t p_t(x \mid x_0, \eta)
=
-\nabla_x \cdot \left(p_t(x \mid x_0, \eta)\, u_t(x \mid x_0, \eta)\right),
\label{eq:cond_continuity_clean}
\end{equation}
where $u_t(x \mid x_0, \eta)$ denotes the conditional velocity field induced by the path
$X_t = a(t)X_1 + b(t)X_0 + c(t)\eta$.

\subsubsection{Marginal distribution}

The marginal density is obtained by integrating over the joint distribution of $(X_0, \eta)$:
\begin{equation}
p_t(x)
=
\iint p_t(x \mid x_0, \eta)\,
p_{\mathrm{init}}(x_0)\,
p_\eta(\eta)\,
dx_0\, d\eta.
\label{eq:marginal_clean}
\end{equation}

We define the corresponding marginal velocity field as
\begin{equation}
u_t(x)
=
\frac{
\iint p_t(x \mid x_0, \eta)\,
u_t(x \mid x_0, \eta)\,
p_{\mathrm{init}}(x_0)\,
p_\eta(\eta)\,
dx_0\, d\eta
}{p_t(x)}.
\label{eq:marginal_vector_clean}
\end{equation}

\subsubsection{Proof of the continuity equation}

\begin{proof}
We start by differentiating the marginal density:
\begin{align*}
\partial_t p_t(x)
&=
\partial_t \iint p_t(x \mid x_0, \eta)\,
p_{\mathrm{init}}(x_0)\,
p_\eta(\eta)\,
dx_0\, d\eta \\
&=
\iint \partial_t p_t(x \mid x_0, \eta)\,
p_{\mathrm{init}}(x_0)\,
p_\eta(\eta)\,
dx_0\, d\eta.
\end{align*}

Substituting the conditional continuity equation \eqref{eq:cond_continuity_clean}, we obtain:
\begin{align*}
\partial_t p_t(x)
&=
\iint
-\nabla_x \cdot \left(p_t(x \mid x_0, \eta)\,
u_t(x \mid x_0, \eta)\right)
p_{\mathrm{init}}(x_0)\,
p_\eta(\eta)\,
dx_0\, d\eta \\
&=
-\nabla_x \cdot
\left(
\iint p_t(x \mid x_0, \eta)\,
u_t(x \mid x_0, \eta)\,
p_{\mathrm{init}}(x_0)\,
p_\eta(\eta)\,
dx_0\, d\eta
\right).
\end{align*}

Using the definition of the marginal velocity field \eqref{eq:marginal_vector_clean}, we rewrite:
\begin{align*}
\iint p_t(x \mid x_0, \eta)\,
u_t(x \mid x_0, \eta)\,
p_{\mathrm{init}}(x_0)\,
p_\eta(\eta)\,
dx_0\, d\eta
=
p_t(x)\, u_t(x).
\end{align*}

Substituting back yields:
\begin{equation*}
\partial_t p_t(x)
=
-\nabla_x \cdot \bigl(p_t(x)\, u_t(x)\bigr),
\end{equation*}
which completes the proof.
\end{proof}

\section{AuxPath Flow Matching Loss Functions}
\label{sec:AuxPath_loss}

\subsection{Definitions}

We define two equivalent training objectives under AuxPath-FM:

\begin{itemize}
\item \textbf{AuxPath Conditional Flow Matching (AuxPath):}
\begin{equation}
\mathcal{L}_{\mathrm{AuxPath}}(\theta)
=
\mathbb{E}_{\substack{
t \sim \mathcal{U}[0,1]\\
X_0 \sim p_{\mathrm{init}}\\
X_1 \sim p_{\mathrm{data}}\\
\eta \sim p_\eta}}
\left\|
v_\theta(X_t,t)
-
\left(\dot{a}(t)X_1 + \dot{b}(t)X_0 + \dot{c}(t)\eta\right)
\right\|^2,
\end{equation}
where $X_t = a(t)X_1 + b(t)X_0 + c(t)\eta$.

\item \textbf{AuxPath Flow Matching (AuxPath-FM):}
\begin{equation}
\mathcal{L}_{\mathrm{AuxPath}}(\theta)
=
\mathbb{E}_{t \sim \mathcal{U}[0,1],\, x \sim p_t}
\left\|
v_\theta(x,t) - u_t^{\mathrm{target}}(x)
\right\|^2.
\end{equation}
\end{itemize}

\subsection{Loss Function Equivalence Theorem}
We define the conditional flow matching loss as $\mathcal{L}_{\mathrm{AuxPath}}(\theta) = \mathbb{E}_{t, X_1, X_0, \eta} [ \| v_\theta(X_t, t) - \dot{X}_t \|^2 ]$ and the marginal flow matching loss as $\mathcal{L}_{\mathrm{AuxPath\text{-}Marginal}}(\theta) = \mathbb{E}_{t, x \sim p_t} [ \| v_\theta(x, t) - u_t(x) \|^2 ]$.

\begin{theorem}
The conditional objective is equivalent to the marginal objective up to a constant $C$ independent of $\theta$:
\begin{equation}
    \mathcal{L}_{\mathrm{AuxPath\text{-}Marginal}}(\theta)
    =
    \mathcal{L}_{\mathrm{AuxPath}}(\theta) + C.
\end{equation}
\end{theorem}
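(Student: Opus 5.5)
The plan is to expand both squared norms and show that the $\theta$-dependent parts coincide. The only genuinely nontrivial identity is the cross term, which is handled by the tower property once the marginal field is identified as a conditional expectation.

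\textbf{Step 1: identify the marginal field.} Since $X_1$, $X_0$, $\eta$ are independent, the joint law of $(X_t, X_0, \eta, X_1)$ is well defined. The conditional path $p_t(x\mid x_0,\eta)$ is the law of $a(t)X_1 + b(t)x_0 + c(t)\eta$. On the support of this path, the conditional field is
\begin{equation*}
u_t(x\mid x_0,\eta) = \mathbb{E}\bigl[\dot a(t)X_1 + \dot b(t)x_0 + \dot c(t)\eta \,\big|\, X_t = x, X_0 = x_0, \eta\bigr].
\end{equation*}
When $a(t)\neq 0$, $X_1$ is determined by $x$, $x_0$, $\eta$, so this expectation is the deterministic value $\dot X_t$. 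Substituting this into the definition of $u_t(x)$ in Sec.~\ref{sub:A-FM} and applying Bayes' rule with $p_t(x)>0$ gives $u_t(x) = \mathbb{E}[\dot X_t \mid X_t = x]$.

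\textbf{Step 2: expand both losses.} For fixed $t$, write $\|v_\theta - u\|^2 = \|v_\theta\|^2 - 2\langle v_\theta, u\rangle + \|u\|^2$.
\begin{itemize}
\item The marginal loss becomes $\mathbb{E}_{x\sim p_t}\|v_\theta(x,t)\|^2 - 2\,\mathbb{E}_{x\sim p_t}\langle v_\theta(x,t), u_t(x)\rangle + \mathbb{E}\|u_t(x)\|^2$.
\item The conditional loss becomes $\mathbb{E}\|v_\theta(X_t,t)\|^2 - 2\,\mathbb{E}\langle v_\theta(X_t,t), \dot X_t\rangle + \mathbb{E}\|\dot X_t\|^2$.
\end{itemize}
The first terms agree because $X_t\sim p_t$ by the marginalization formula for $p_t$. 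The last terms do not depend on $\theta$, so they contribute to $C = \mathbb{E}\|u_t(X_t)\|^2 - \mathbb{E}\|\dot X_t\|^2$. Both terms are finite provided the endpoint and auxiliary variables have second moments and $\dot a, \dot b, \dot c$ are bounded. It follows that $C\le 0$, by Jensen's inequality.

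\textbf{Step 3: match the cross terms.} By the tower property,
\begin{equation*}
\mathbb{E}\langle v_\theta(X_t,t), \dot X_t\rangle = \mathbb{E}\bigl\langle v_\theta(X_t,t), \mathbb{E}[\dot X_t\mid X_t]\bigr\rangle = \mathbb{E}_{x\sim p_t}\langle v_\theta(x,t), u_t(x)\rangle.
\end{equation*}
This is the same computation as writing out $\iiint p_t(x\mid x_0,\eta)\langle v_\theta, u_t(x\mid x_0,\eta)\rangle\,p_{\mathrm{init}}\,p_\eta$ and using the definition of $u_t(x)$. Integrating over $t\sim\mathcal{U}[0,1]$ yields the claim. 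The gradient identity in Theorem~\ref{thm:loss_equivalence_main} then follows because $C$ does not depend on $\theta$, given enough integrability to differentiate under the expectation.

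\textbf{Main obstacle: rigor of Step 1.} Conditioning only on $(x_0,\eta)$ leaves $X_1$ random, so $u_t(x\mid x_0,\eta)$ must be read as a conditional expectation of $\dot X_t$ given $X_t$, not as a pointwise derivative. The regime $a(t)=0$, for example near $t=0$, needs separate treatment. My first move would be to bypass these densities and work directly with the joint random variables: \emph{define} $u_t(x) := \mathbb{E}[\dot X_t\mid X_t=x]$ and check that this agrees with the paper's formula for $u_t(x)$ almost everywhere. After that, Step 3 is a purely measure-theoretic tower-property argument that requires only square integrability.
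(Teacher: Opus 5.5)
Your proof is correct and follows the same route as the paper. Both expand the squared norms, match the $\|v_\theta\|^2$ terms via $X_t\sim p_t$, match the cross terms by the tower property using $u_t(x)=\mathbb{E}[\dot X_t\mid X_t=x]$, and absorb the $\theta$-free terms into $C=\mathbb{E}\|u_t(X_t)\|^2-\mathbb{E}\|\dot X_t\|^2$. The differences are that you derive the identity $u_t(x)=\mathbb{E}[\dot X_t\mid X_t=x]$ from the integral definition of the marginal field, which the paper simply takes as a definition, and you also note that $C\le 0$.
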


\begin{proof}

We start from the marginal objective:
\begin{equation}
\mathcal{L}_{\mathrm{AuxPath\text{-}Marginal}}(\theta)
=
\mathbb{E}_{t,x \sim p_t}
\left[
\| v_\theta(x,t) - u_t(x) \|^2
\right].
\end{equation}

Expanding the squared norm:
\begin{align}
\mathcal{L}_{\mathrm{AuxPath\text{-}Marginal}}(\theta)
&=
\mathbb{E}_{t,x}
\Big[
\|v_\theta(x,t)\|^2
- 2\, v_\theta(x,t)^T u_t(x)
+ \|u_t(x)\|^2
\Big].
\end{align}

Using the definition
\begin{equation}
u_t(x) = \mathbb{E}[\dot{X}_t \mid x],
\end{equation}
we apply the law of total expectation:
\begin{align}
\mathbb{E}_{t,x}
\big[
v_\theta(x,t)^T u_t(x)
\big]
&=
\mathbb{E}_{t,x}
\Big[
v_\theta(x,t)^T \mathbb{E}[\dot{X}_t \mid x]
\Big] \\
&=
\mathbb{E}_{t,X_t}
\big[
v_\theta(X_t,t)^T \dot{X}_t
\big].
\end{align}

Similarly,
\begin{equation}
\mathbb{E}_{t,x}\|u_t(x)\|^2
=
\mathbb{E}_{t,X_t}
\Big[
\|\mathbb{E}[\dot{X}_t \mid X_t]\|^2
\Big],
\end{equation}
which is independent of $\theta$.

Therefore,
\begin{align}
\mathcal{L}_{\mathrm{AuxPath\text{-}Marginal}}(\theta)
&=
\mathbb{E}_{t,X_t}
\Big[
\|v_\theta(X_t,t)\|^2
- 2\, v_\theta(X_t,t)^T \dot{X}_t
\Big]
+ C_1.
\end{align}

On the other hand, expanding the conditional objective:
\begin{align}
\mathcal{L}_{\mathrm{AuxPath}}(\theta)
&=
\mathbb{E}_{t,X_t}
\Big[
\|v_\theta(X_t,t)\|^2
- 2\, v_\theta(X_t,t)^T \dot{X}_t
+ \|\dot{X}_t\|^2
\Big].
\end{align}

Comparing the two expressions, we obtain:
\begin{equation}
\mathcal{L}_{\mathrm{AuxPath\text{-}Marginal}}(\theta)
=
\mathcal{L}_{\mathrm{AuxPath}}(\theta)
+ C,
\end{equation}
where $C = C_1 - \mathbb{E}[\|\dot{X}_t\|^2]$ is independent of $\theta$.

\end{proof}

\section{Compositional and Degenerate Auxiliary Variables}

\subsection{Compositional Auxiliary Distributions}

The auxiliary variable $\eta$ can be constructed as a mixture:
\begin{equation}
\eta \sim \sum_k \pi_k p_\eta^{(k)}, \qquad \sum_k \pi_k = 1.
\end{equation}
This remains within the AuxPath-FM class and induces valid probability paths.

\subsection{Deterministic Auxiliary Variables}
\label{sec:Y_is_FX}

We further consider the case where $\eta$ is a deterministic function of endpoints:
\begin{equation}
\eta = F(X_0).
\end{equation}

The path becomes
\begin{equation}
X_t = a(t)X_1 + b(t)X_0 + c(t)F(X_0).
\end{equation}

Conditioned on $X_0$, the dynamics reduce to
\begin{equation}
\partial_t p_t(x|x_0)
=
-\nabla_x \cdot (p_t(x|x_0) u_t(x|x_0)).
\end{equation}

The marginal distribution becomes
\begin{equation}
p_t(x) = \int p_t(x|x_0)p_{\mathrm{init}}(x_0)dx_0,
\end{equation}
with velocity field
\begin{equation}
u_t(x) =
\frac{1}{p_t(x)}
\int p_t(x|x_0)u_t(x|x_0)p_{\mathrm{init}}(x_0)dx_0.
\end{equation}

Thus the formulation reduces to standard flow matching.

\section{Trajectory-Level Classifier-Free Guidance}
\label{sec:auxpath_cfg}

AuxPath-FM naturally induces a trajectory-level formulation of classifier-free guidance (CFG)~\cite{ho2022classifier} by encoding semantic information directly into the auxiliary variable $\eta$.

\paragraph{Auxiliary variable parameterization.}
We consider a structured auxiliary variable of the form
\begin{equation}
    \eta = F_\phi(y),
\end{equation}
where $y$ denotes the condition (e.g., class label), and $F_\phi$ is a lightweight neural network mapping labels to feature vectors.

Under the auxiliary probability path
\begin{equation}
    X_t = a(t)X_1 + b(t)X_0 + c(t)\eta,
\end{equation}
the time derivative becomes
\begin{equation}
    \dot{X}_t = \dot{a}(t)X_1 + \dot{b}(t)X_0 + \dot{c}(t)\eta.
\end{equation}

\paragraph{Induced conditional velocity field.}
From the flow matching objective, the learned velocity field can be decomposed as
\begin{equation}
    v(x,t,y) \approx v_\theta(x,t) + \dot{c}(t) F_\phi(y),
\end{equation}
where $v_\theta(x,t)$ captures the base transport dynamics, and $\dot{c}(t)F_\phi(y)$ acts as a condition-dependent drift term induced by the auxiliary path.

Similarly, the unconditional velocity field is obtained by replacing $y$ with the null condition $\emptyset$:
\begin{equation}
    v(x,t,\emptyset) \approx v_\theta(x,t) + \dot{c}(t) F_\phi(\emptyset).
\end{equation}

\paragraph{Training of $F_\phi$.}
We train $F_\phi$ to learn semantic prototypes corresponding to class-wise centers. Specifically, we minimize
\begin{equation}
    \mathcal{L}_{F} =
    \mathbb{E}_{X_1, y}
    \left[
        \| F_\phi(y) - X_1 \|^2
    \right],
\end{equation}
which encourages
\begin{equation}
    F_\phi(y) \approx \mathbb{E}[X_1 \mid y].
\end{equation}

For the unconditional embedding, we define
\begin{equation}
    F_\phi(\emptyset) \approx \mathbb{E}[X_1],
\end{equation}
which corresponds to the global data centroid.

\paragraph{Trajectory-level CFG.}
Applying standard classifier-free guidance yields
\begin{equation}
\begin{aligned}
    v_{\mathrm{cfg}}(x,t)
    &= v(x,t,\emptyset)
    + w \big( v(x,t,y) - v(x,t,\emptyset) \big) \\
    &= v_\theta(x,t)
    + \dot{c}(t)\Big[
        F_\phi(\emptyset)
        + w\big(F_\phi(y) - F_\phi(\emptyset)\big)
    \Big].
\end{aligned}
\end{equation}

\paragraph{Interpretation.}
The guidance term corresponds to a linear interpolation in the auxiliary space:
\begin{equation}
    \eta_{\mathrm{cfg}} =
    F_\phi(\emptyset) + w\big(F_\phi(y) - F_\phi(\emptyset)\big),
\end{equation}
which shifts the trajectory from the global centroid toward the class-specific centroid.

\paragraph{Efficiency.}
Unlike standard CFG, which requires two evaluations of $v_\theta(x,t)$, the above formulation reuses a single evaluation of $v_\theta$ and applies guidance only through the lightweight module $F_\phi$, requiring only two evaluations of $F_\phi$.

\paragraph{Implication.}
This shows that classifier-free guidance can be interpreted as a \emph{trajectory-level perturbation} in the auxiliary space, providing a more efficient and principled alternative to conventional CFG.

\subsection{Compatibility with Dual CFG Scales}
\label{subsec:cfg_dual}

We further study the compatibility between trajectory-level CFG and standard classifier-free guidance (CFG) under different guidance scales. All experiments use a \textbf{DiT-B/2} backbone~\cite{peebles2023dit}.

We consider two independent guidance strengths: the trajectory-level scale $w_{\text{aux}}$ and the standard CFG scale $w_{\text{CFG}}$, both selected from $\{1.0, 1.2\}$, yielding $2 \times 2$ configurations.

A key difference lies in computation cost. Standard CFG requires \textbf{two backbone evaluations per sampling step} (conditional and unconditional forward passes), while trajectory-level CFG modifies the velocity via a lightweight auxiliary term and keeps the backbone evaluation \textbf{single-pass}. Accordingly, we report:

\begin{itemize}
    \item \textbf{Pass}: number of backbone evaluations per denoising step (1 for single-pass, 2 for standard CFG);
    \item \textbf{GFLOPs}: total computational cost per image generation (50 sampling steps).
\end{itemize}

Results on ImageNet-1k ($256 \times 256$) are reported in Table~\ref{tab:cfg_dual_dit}. We observe stable behavior across all configurations. 

Overall, trajectory-level CFG remains fully compatible with standard CFG, while consistently reducing inference cost due to the single-pass backbone evaluation.

\begin{table}[t]
\centering
\caption{Joint evaluation of trajectory-level CFG and standard CFG on ImageNet-1k using DiT-B/2. Pass denotes backbone evaluations per step. GFLOPs are measured per image generation (50 steps).}
\label{tab:cfg_dual_dit}

\setlength{\tabcolsep}{3.2pt}
\small
\begin{tabular}{c c c c c}
\toprule
$w_{\text{aux}}$ & $w_{\text{CFG}}$ & Pass & GFLOPs & FID $\downarrow$ \\
\midrule
1.0 & 1.0 & 1 & 2301.6028 & 22.49 \\
1.0 & 1.2 & 2 & 2303.0808 & 12.75 \\
\midrule
1.2 & 1.0 & 1 & 4601.7198 & 21.47 \\
1.2 & 1.2 & 2 & 4603.1978 & 11.87 \\
\bottomrule
\end{tabular}
\end{table}

\section{From Unconditional to Conditional Generation via Auxiliary Paths}
\label{sec:auxpath_finetune}

A key advantage of AuxPath-FM is that it enables converting a pre-trained unconditional flow model into a conditional generator with minimal additional cost.

\paragraph{Method.}
Given a pre-trained velocity model $v_\theta(x,t)$ trained on unconditional data, we introduce a lightweight semantic module $F_\phi(y)$ and fine-tune the model using the conditional AuxPath formulation (Algorithm~\ref{alg:auxpath_cond_train}). 
Importantly, the velocity model is trained to match the baseline flow field, while the auxiliary term $\eta = F_\phi(y)$ is incorporated only through the probability path. 
At inference time, conditional generation is achieved by augmenting the dynamics as
\begin{equation}
    v(x,t,y) = v_\theta(x,t) + \dot{c}(t) F_\phi(y),
\end{equation}
without modifying the original model architecture.

\paragraph{Results.}
We evaluate this strategy by initializing from a pre-trained unconditional model and fine-tuning with the auxiliary module. 
Results in Table~\ref{tab:finetune_results} show that AuxPath-FM effectively enables conditional generation with strong performance, demonstrating that the auxiliary path provides a simple yet effective mechanism for upgrading existing models.

\begin{table}[h]
\centering
\caption{Unconditional-to-conditional fine-tuning results.}
\begin{tabular}{lcccc}
\toprule
Method & Acc(\%) $\uparrow$ & FID $\downarrow$ & sFID $\downarrow$ & IS $\uparrow$\\
\midrule
Unconditional  &  N/A & 3.72 & 0.0017& 9.06\\
 + numeric $\eta$ & 79.18 & 5.73 & 0.0038 & 8.89\\
+ learned $\eta$ & 77.03 & 5.90 & 0.0043 & 8.93\\
\bottomrule
\end{tabular}
\label{tab:finetune_results}
\end{table}

\paragraph{Discussion.}
These results highlight that AuxPath-FM provides a lightweight and effective mechanism for transforming unconditional generative models into conditional ones, without requiring architectural changes or retraining from scratch.

\begin{figure*}[h]
  \centering
  \scalebox{0.95}{
  \begin{minipage}{0.48\textwidth}
    \centering
    \includegraphics[width=\linewidth]{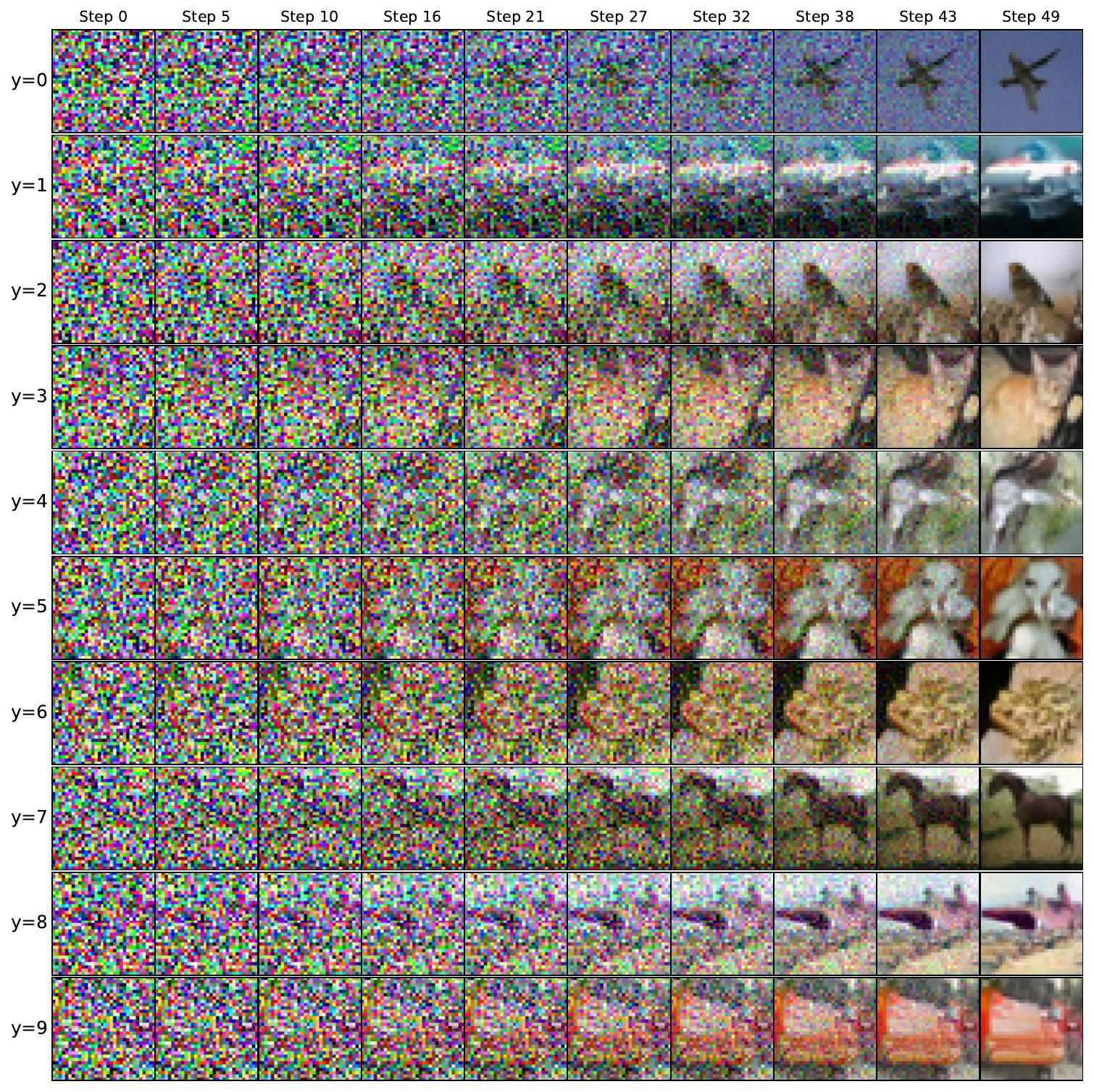}
    \small (a) Trajectory with $w=1.0$
  \end{minipage}
  \hfill
  \begin{minipage}{0.48\textwidth}
    \centering
    \includegraphics[width=\linewidth]{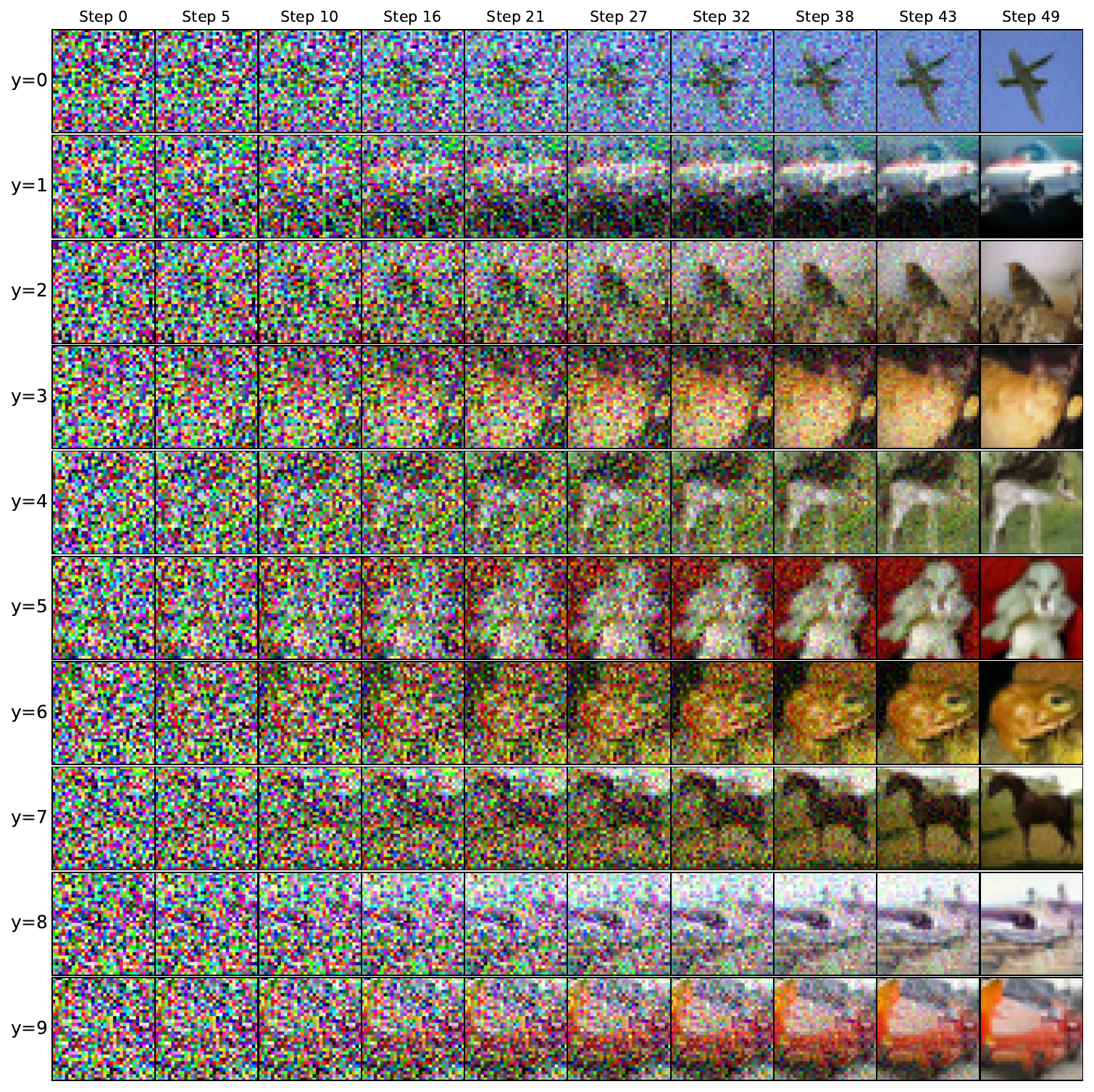}
    \small (b) Trajectory with $w=2.0$
  \end{minipage}}
  \caption{\textbf{Generation trajectories of AuxPath-FM with trajectory-level CFG on CIFAR-10.}
  Each row shows the evolution of a sample along the probability path from the initial state ($t=0$) to the final generated image ($t=1$). 
  Unlike methods that apply guidance through repeated network evaluations, AuxPath-FM incorporates classifier-free guidance directly into the trajectory via auxiliary variables. 
  Increasing the guidance scale from $w=1.0$ to $w=2.0$ results in more semantically aligned samples, while preserving smooth and coherent transitions along the path. 
  This demonstrates that trajectory-level guidance effectively steers generation throughout the entire flow using only a single backbone evaluation.}
  \label{fig:cifar10_auxpath_trajs}
\end{figure*}

\begin{figure*}[h]
  \centering
    \scalebox{0.9}{
  \begin{minipage}{0.48\textwidth}
    \centering
    \includegraphics[width=\linewidth]{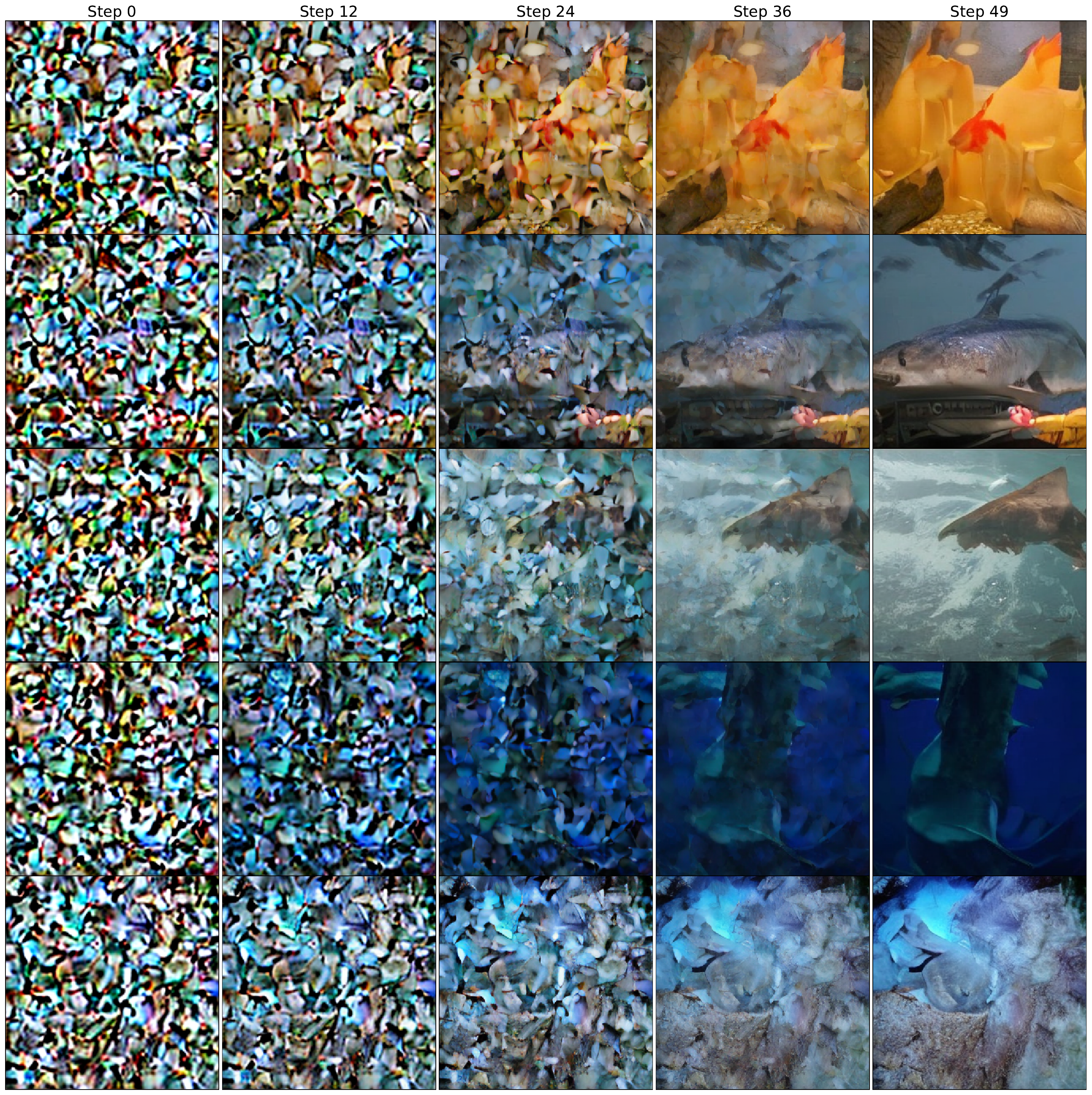}
    \small (a) $w=1.0$
  \end{minipage}
  \hfill
  \begin{minipage}{0.48\textwidth}
    \centering
    \includegraphics[width=\linewidth]{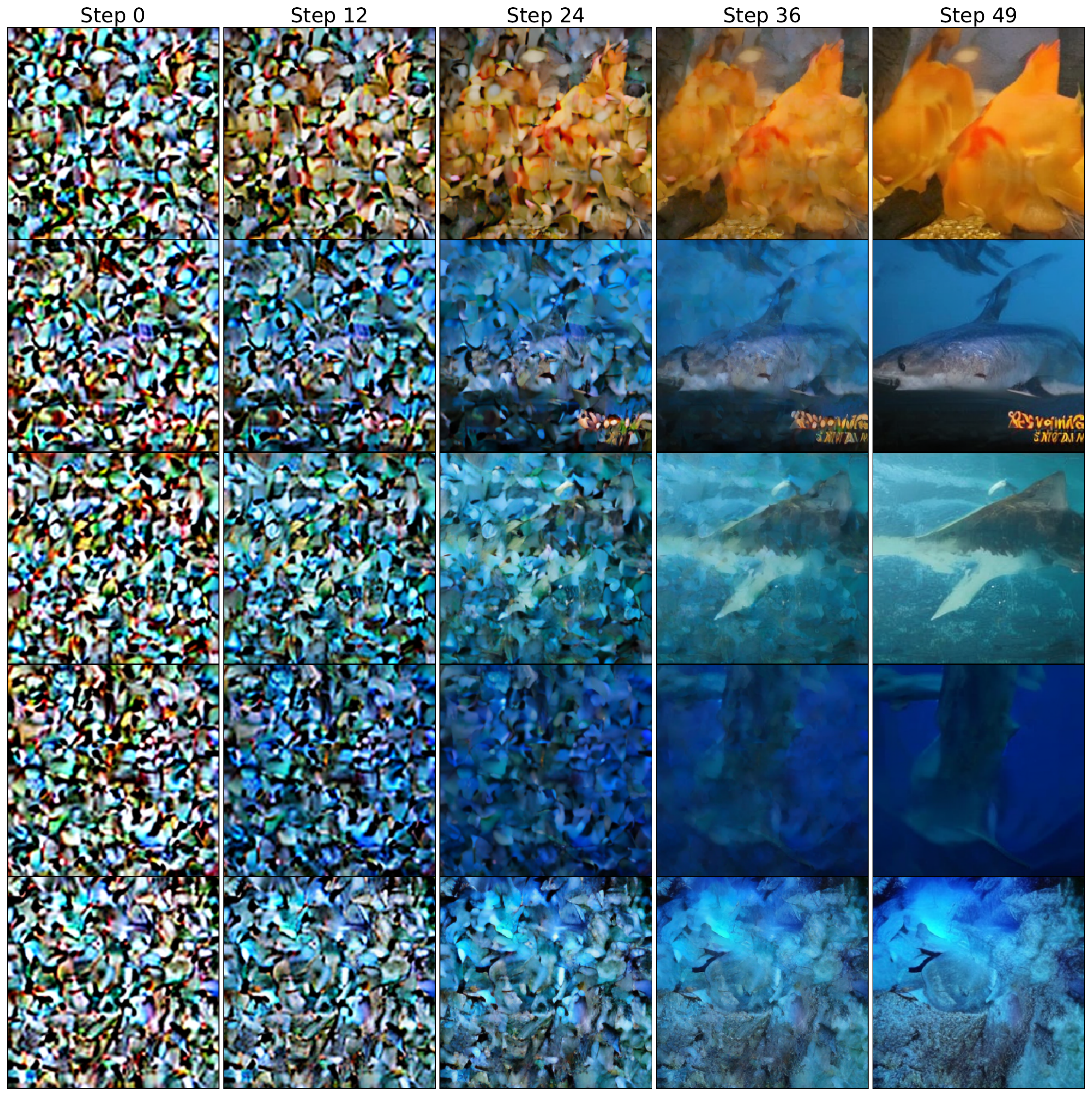}
    \small (b) $w=1.5$
  \end{minipage}}

  \caption{\textbf{Trajectory-level CFG on ImageNet-1k with AuxPath-FM.}
  Each row shows the evolution of a sample along the probability path from $t=0$ to $t=1$. 
  Increasing the guidance scale from $w=1.0$ to $w=1.5$ yields more semantically aligned samples while preserving smooth trajectory transitions. 
  Guidance is applied directly along the path via auxiliary variables, enabling effective control with a single backbone evaluation.}
  
  \label{fig:imagenet_auxpath_cfg}
\end{figure*}

\section{Additional Qualitative Results}
\label{appendix:qualitative}

In this section, we present additional qualitative results to visualize the generation process of AuxPath-FM under different guidance strengths. These results complement the quantitative evaluations in the main text by illustrating how semantic auxiliary guidance influences the entire sampling trajectory.

\section{Anonymous Code Release and Reproducibility}
\label{sec:appendix_code}

To facilitate reproducibility and enable further research, we provide an anonymous code release of AuxPath Flow Matching at:

\begin{center}
\url{https://github.com/XinPeng76/AuxPath-FM.git}
\end{center}

\paragraph{Code scope.}
The current anonymous repository includes the core implementation of the proposed AuxPath framework, with a particular focus on auxiliary variable modeling and flexible path construction via different choices of terminal noise $\eta$. The released code covers the training logic used in ImageNet-1000 experiments (see \texttt{train\_imagenet256\_DIT\_AFM.py}).

\section{Statement on the Use of Large Language Models}
\label{app:llm_statement}

During the preparation of this manuscript, large language models (LLMs) were used in a limited manner
solely for language editing purposes, such as improving clarity, grammar, and academic style.
All aspects of the research conception, methodological development, experimental design,
analysis of results, and the scientific conclusions presented in this paper
were carried out independently by the authors.


\end{document}

%% file: code.tex
\definecolor{codeblue}{rgb}{0.25,0.5,0.5}
\definecolor{codekw}{rgb}{0.85, 0.18, 0.50}

\definecolor{codesign}{RGB}{0, 0, 255}
\definecolor{codefunc}{rgb}{0.85, 0.18, 0.50}

\lstdefinelanguage{PythonFuncColor}{
  language=Python,
  keywordstyle=\color{blue}\bfseries,
  commentstyle=\color{codeblue},  
  stringstyle=\color{orange},
  showstringspaces=false,
  basicstyle=\ttfamily\small,
  literate=
    {*}{{\color{codesign}* }}{1}
    {-}{{\color{codesign}- }}{1}
    {+}{{\color{codesign}+ }}{1}
    {dataloader}{{\color{codefunc}dataloader}}{1}
    {sample_t_r}{{\color{codefunc}sample\_t\_r}}{1}
    {randn}{{\color{codefunc}randn}}{1}
    {randn_like}{{\color{codefunc}randn\_like}}{1}
    {jvp}{{\color{codefunc}jvp}}{1}
    {stopgrad}{{\color{codefunc}stopgrad}}{1}
    {metric}{{\color{codefunc}metric}}{1}
}

\lstset{
  language=PythonFuncColor,
  backgroundcolor=\color{white},
  basicstyle=\fontsize{9pt}{9.9pt}\ttfamily\selectfont,
  columns=fullflexible,
  breaklines=true,
  captionpos=b,
}

\begin{wrapfigure}{r}{0.5\linewidth}
\vspace{-1.2em}
\centering
\begin{minipage}{0.95\linewidth}

\begin{algorithm}[H]
\caption{{AuxPath-FM}: Training}
\label{alg:auxpath_train}
\footnotesize
\setlength{\baselineskip}{8.8pt}

\begin{lstlisting}
# v_theta: velocity model
# a(t), b(t), c(t): path coefficients
# eta ~ p_eta (arbitrary distribution)
x0, x1 = sample_batch()
eta = sample_eta()
t = sample_uniform()

X_t = a(t)*x1 + b(t)*x0 + c(t)*eta
v_t = dot_a(t)*x1 + dot_b(t)*x0 + dot_c(t)*eta

loss = metric(v_theta(X_t,t) - v_t)
\end{lstlisting}

\end{algorithm}

\vspace{-1.4em}

\begin{algorithm}[H]
\caption{{AuxPath-FM}: Sampling}
\label{alg:auxpath_sample}
\footnotesize
\setlength{\baselineskip}{8.8pt}

\begin{lstlisting}
x = randn(x_shape)

for t in schedule:
    x = x + v_theta(x,t) * dt
\end{lstlisting}

\end{algorithm}

\end{minipage}
\vspace{-1.2em}
\end{wrapfigure}

%% file: code1.tex
\definecolor{codeblue}{rgb}{0.25,0.5,0.5}
\definecolor{codekw}{rgb}{0.85, 0.18, 0.50}

\definecolor{codesign}{RGB}{0, 0, 255}
\definecolor{codefunc}{rgb}{0.85, 0.18, 0.50}

\lstdefinelanguage{PythonFuncColor}{
  language=Python,
  keywordstyle=\color{blue}\bfseries,
  commentstyle=\color{codeblue},  
  stringstyle=\color{orange},
  showstringspaces=false,
  basicstyle=\ttfamily\small,
  literate=
    {*}{{\color{codesign}* }}{1}
    {-}{{\color{codesign}- }}{1}
    {+}{{\color{codesign}+ }}{1}
    {dataloader}{{\color{codefunc}dataloader}}{1}
    {sample_t_r}{{\color{codefunc}sample\_t\_r}}{1}
    {randn}{{\color{codefunc}randn}}{1}
    {randn_like}{{\color{codefunc}randn\_like}}{1}
    {jvp}{{\color{codefunc}jvp}}{1}
    {stopgrad}{{\color{codefunc}stopgrad}}{1}
    {metric}{{\color{codefunc}metric}}{1}
}

\lstset{
  language=PythonFuncColor,
  backgroundcolor=\color{white},
  basicstyle=\fontsize{9pt}{9.9pt}\ttfamily\selectfont,
  columns=fullflexible,
  breaklines=true,
  captionpos=b,
}

\begin{wrapfigure}{r}{0.5\linewidth}
\vspace{-1.2em}
\centering
\begin{minipage}{0.95\linewidth}

\begin{algorithm}[H]
\caption{{AuxPath-FM (Cond.)}: Training}
\label{alg:auxpath_cond_train}
\footnotesize
\setlength{\baselineskip}{8.8pt}

\begin{lstlisting}
# Stage 1: train semantic model
x1, y = sample_data()     
pred = F_phi(y)
loss_F = metric(pred - x1)

# Stage 2: train flow model
x0, x1, y = sample_batch()  
eta = F_phi(y)          
t = sample_uniform()
X_t = a(t)*x1 + b(t)*x0 + c(t)*eta
v_t = dot_a(t)*x1 + dot_b(t)*x0
loss = metric(v_theta(X_t,t) - v_t)
\end{lstlisting}

\end{algorithm}

\vspace{-1.4em}

\begin{algorithm}[H]
\caption{{AuxPath-FM (Cond.)}: Sampling}
\label{alg:auxpath_cond_sample}
\footnotesize
\setlength{\baselineskip}{8.8pt}

\begin{lstlisting}
x = randn(x_shape)
eta = F_phi(y)
for t in schedule:
    v = v_theta(x, t) + dot_c(t)*eta
    x = x + v * dt
\end{lstlisting}

\end{algorithm}

\vspace{-1.4em}

\begin{algorithm}[H]
\caption{{AuxPath-FM}: CFG Sampling}
\label{alg:auxpath_cond_cfg}
\footnotesize
\setlength{\baselineskip}{8.8pt}

\begin{lstlisting}
x = randn(x_shape)
eta_c = F_phi(y)
eta_u = F_phi(empty)
eta_cfg = eta_u + w * (eta_c - eta_u)

for t in schedule:
    # trajectory-level velocity
    v = v_theta(x, t) + dot_c(t) * eta_cfg
    x = x + v * dt
\end{lstlisting}

\end{algorithm}

\end{minipage}
\vspace{-1.2em}
\end{wrapfigure}